\pgfplotsset{compat=1.7}
\pgfplotsset{yticklabel style={text width=3em,align=right}}
\newcommand{\Lap}{\mathrm{Lap}}
\newcommand{\Gumb}{\mathrm{Gumb}}
\newcommand{\ct}{\mbox{count}}
\newcommand{\OPT}{\mathrm{OPT}}
\newcommand{\G}{G}
\providecommand{\algorithmname}{Algorithm}
\newtheorem{theorem}{Theorem}
\newtheorem{definition}[theorem]{Definition}
\newtheorem{lemma}[theorem]{Lemma}
\newtheorem{remark}[theorem]{Remark}
\newtheorem{claim}[theorem]{Claim}
\DeclareMathOperator{\Ex}{\mathrm{E}}
\begin{document}
\title{Streaming Submodular Maximization with Differential Privacy}
\author{Anamay Chaturvedi\footnotemark[1], Huy L\^{e} Nguy\~{\^{e}}n\footnotemark[1], Thy Nguyen\thanks{Equal contribution. All three authors were supported in part by NSF CAREER grant CCF-1750716 and NSF grant CCF-1909314.}}
\date{
\texttt{\{chaturvedi.a,hu.nguyen,nguyen.thy2\} @northeastern.edu}\\[2ex]
Khoury College of Computer Sciences,\\
Northeastern University\\[2ex]
\today
}

\maketitle
\begin{abstract}
    In this work, we study the problem of privately maximizing a submodular function in the streaming setting. Extensive work has been done on privately maximizing submodular functions in the general case when the function depends upon the private data of individuals. However, when the size of the data stream drawn from the domain of the objective function is large or arrives very fast, one must privately optimize the objective within the constraints of the streaming setting. We establish fundamental differentially private baselines for this problem and then derive better trade-offs between privacy and utility for the special case of decomposable submodular functions. A submodular function is decomposable when it can be written as a sum of submodular functions; this structure arises naturally when each summand function models the utility of an individual and the goal is to study the total utility of the whole population as in the well-known Combinatorial Public Projects Problem. Finally, we complement our theoretical analysis with experimental corroboration.
\end{abstract}

\section{Introduction}

Consider the task of a service provider that trains machine learning models for a set of users, e.g., platforms such as  Amazon Sagemaker and Microsoft Azure. In many cases, collecting features can be costly and  the service provider has to select up to a limited number of features for their models. Since each set of features benefits different users differently, i.e, its utility varies among users, the goal is to find a subset of features that maximizes the total utility over all users. For instance, given a dataset of health attributes of patients, different users may want to predict the likelihood of different diseases, and a set of features may be useful for some illnesses but extraneous for the others. A common approach to measure the utility of set of features is the mutual information between the patients' health attributes restricted to the chosen set of features and the empirical likelihood of the diseases of interest under the Naive Bayes assumption. The function is known to exhibit diminishing returns property and is \emph{monotone submodular} \citep{krause2012near}. 




\begin{definition}[Submodular functions]
    A function $f : 2^{V} \to \mathbb{R}$ is called submodular if it has the diminishing returns property, i.e., for every pair of sets $S \subset T \subset V$ and element $e$ not in $T$, the marginal gain of $e$ over $T$, is no more than the marginal gain of $e$ over $S$; i.e., $f(\{e\} \cup T) - f(T) \leq f(\{e\} \cup S) - f(S)$. For all $S\subset V$, we say that $f(S)$ is the \emph{utility} achieved by set $S$, and we let $\OPT$ denote any arbitrary fixed optimal solution for the problem of maximizing $f$ subject to the constraint that the set of elements picked is of size at most $k$.
\end{definition}

The problem of maximizing a submodular function subject to constraints on the subsets of $V$ one is allowed to pick occurs across domains like computer science, electrical engineering \citep{narayanan1997submodular}, and economics \citep{DBLP:conf/soda/DobzinskiS06}. More specifically in theoretical computer science, submodular maximization is a fundamental instance of \emph{combinatorial optimization}, generalizing problems like Max-Cut and Facility Location \citep{schrijver2003combinatorial}. On the other hand, submodular maximization has also been applied concretely to numerous problems in machine learning such as feature selection for classification \citep{DBLP:conf/uai/KrauseG05}, influence maximization in social networks \citep{DBLP:conf/kdd/KempeKT03}, document and corpus summarization \citep{DBLP:conf/acl/LinB11}, result diversification in recommender systems \citep{DBLP:conf/recsys/ParambathUG16} and exemplar based clustering \citep{DBLP:conf/iccv/DueckF07} - we refer the reader to \cite{DBLP:conf/icml/MitrovicB0K17} and \cite{DBLP:journals/corr/abs-2005-14717} for a more complete list of references. 

In many applications, particularly in machine learning, one must publicly release a solution to the submodular maximization problem, but the information used to perform the optimization is private. For instance, consider the example above where the sensitive health information of patients is used to compute the utility of different sets of features. A good solution may be very sensitive to the information used to compute it, and  consequently may also reveal too much information about the private data set. This motivates the problem of \emph{differentially private} submodular maximization, as first considered in \cite{DBLP:conf/soda/GuptaLMRT10}.

Differential privacy (DP) \citep{DBLP:conf/tcc/DworkMNS06} is an attribute that when satisfied by an algorithm, allows one to promise strong information-theoretic guarantees limiting how much an adversary might learn about the private data set based on the output of that algorithm run on the data set.

\begin{definition}[Differential privacy, \cite{DBLP:conf/tcc/DworkMNS06}]
    Let $\mathcal{X}$ be the set of all possible data records. We say that a pair of data sets $A, B$ drawn from $\mathcal{X}$ are \emph{neighbouring} (denoted $A \sim B$) if they differ in at most one data record $I$, so for instance $A = B \cup \{I\}$. We say that an algorithm $\mathcal{A}$ mapping from data sets derived from $\mathcal{X}$ to some output co-domain $\mathcal{Y}$ is $(\varepsilon, \delta)$-differentially private for some $\varepsilon, \delta >0$ if for all pairs of neighbouring data sets $A, B$ and all $Y \subset \mathcal{Y}$,
    \begin{align*}
        P(\mathcal{A}(A) \in Y) \leq e^{\varepsilon} P(\mathcal{A}(B) \in Y) + \delta.
    \end{align*}
\end{definition}
Put more simply, the above definition says that the likelihood of any set of outcomes $Y$ can vary by at most an $e^\varepsilon$ multiplicative factor and an additive $\delta$ term if we were to add or drop any one record from our data set. Requiring that an algorithm be differentially private ensures that the amount of information revealed by the output is bounded by a function of the \emph{privacy parameters} $\varepsilon$ and $\delta$. In practice, one picks a value of $\varepsilon$ to be a small constant, for instance in the range $(0.1,1)$, although different applications can deviate significantly from this. It is typically required that $\delta = o(1/n)$ to avoid the pathological case of completely revealing one individual's data and claiming that privacy is achieved.

For a given choice of privacy parameters, DP is typically achieved by adding appropriately scaled noise to obfuscate sensitive values in the course of the computation, and may result in trading off some of accuracy to achieve the desired level of privacy.  Such trade-offs have been shown to be intrinsic to achieving differential privacy for many problems, but a careful accounting for the noise added and the privacy gained can let one significantly improve this trade-off.

Although there is an extensive line of work in privately maximizing submodular functions \citep{DBLP:conf/soda/GuptaLMRT10, DBLP:conf/icml/MitrovicB0K17, DBLP:journals/corr/abs-2005-14717,DBLP:journals/corr/abs-2006-15744, DBLP:conf/aistats/SadeghiF21}, as far as we know there is no work on doing so in the \emph{streaming setting}. Submodular maximization under streaming has been studied extensively \citep{DBLP:conf/icml/GomesK10, DBLP:conf/spaa/KumarMVV13, DBLP:conf/kdd/BadanidiyuruMKK14} and must be applied when the data stream is very large and one has limited disk space, or when one has to pay a cost to retain items for future consideration. Most notably, a $(1/2-\theta)$-approximation algorithm,  that retains only $O(\frac{k \log k}{\theta})$ many elements, was introduced by \cite{DBLP:conf/kdd/BadanidiyuruMKK14} for the problem of streaming submodular maximization; this is near-optimal as it is known that one cannot do better than an approximation factor of $1/2$ \citep{DBLP:journals/corr/abs-2003-13459}. 

\textbf{Problem statement:} In this work, we consider the problem at the intersection of these two lines of inquiry i.e. submodular maximization in the streaming setting under the constraint of differential privacy. For every possible private data set $A$ there is a corresponding monotonic (non-decreasing) submodular function $f_A$, a public stream of data $V$, and a cardinality constraint $k$, and we want to find a subset $S$ of $V$ with cardinality at most $k$ that achieves utility close to $f_A(\OPT)$ in the streaming setting. Following previous work \citep{DBLP:conf/soda/GuptaLMRT10, DBLP:conf/icml/MitrovicB0K17}, we assume a bound on the \emph{sensitivity} of $f_A$ to $A$, i.e. for any $A\sim B$, $\max_{S \subset V} |f_A (S) - f_B(S)| \leq 1$. Such a bound on the sensitivity of $f$ is necessary to calibrate the obfuscating noise that is bounded in the course of the computation. Indeed, even in the case where the sensitivity is at most $1$, our lower bound below demonstrates that we are bound to incur non-trivial additive error; it follows that in the case of unbounded sensitivity there can be no $(\epsilon,\delta)$-differentially private algorithm with a non-trivial utility guarantee.

We first develop an algorithm for privately maximizing general submodular functions with bounded sensitivity by adapting the $1/2-\theta$-approximation algorithm of \cite{DBLP:conf/kdd/BadanidiyuruMKK14} along the lines of the \emph{sparse vector technique} \citep{DBLP:conf/stoc/DworkNRRV09}.

\begin{restatable}{theorem}{PSSMLap}
    \label{thm:PSSMLap}
    Given query access to a monotone submodular function $f_A$ with sensitivity $1$ and an input stream $V$ of length $n$, \cref{alg:PSSM} instantiated with Laplace noise, a cardinality constraint of $k$, an approximation parameter $\theta$, a failure probability $\eta$, and privacy parameters $\varepsilon<1$ and $\delta$, is $(\varepsilon,\delta)$-DP and achieves utility at least
    \begin{align*}
        \left(\frac{1 }{2}-\theta \right) f_A (\OPT) - O \left( \frac{k^{1.5} \log^{1.5} \frac{nk}{\delta\eta \theta}}{\varepsilon \sqrt{\theta}} \right).
    \end{align*}
    with probability $1-\eta$, where $\OPT$ is any arbitrary optimal solution for the non-private submodular maximization problem. Further, this algorithm has essentially the same space complexity as the non-private algorithm, retaining $O(\frac{k \log k}{\theta})$ many elements, and it operates in just one pass.
\end{restatable}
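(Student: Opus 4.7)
I will analyze privacy and utility separately: the privacy analysis treats each of the $O((\log k)/\theta)$ parallel threshold instances of \cref{alg:PSSM} as a sequence of single-accept sparse-vector subroutines, and the utility analysis follows the $(1/2-\theta)$-approximation argument of \cite{DBLP:conf/kdd/BadanidiyuruMKK14} with added noise terms.

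\textbf{Privacy.} Each threshold instance can be viewed as at most $k$ successive executions of the single-accept sparse vector technique (SVT) \citep{DBLP:conf/stoc/DworkNRRV09}, each one scanning the stream for the next element whose noisy marginal-gain query exceeds a noisy threshold. Since the marginal gains have sensitivity $1$, calibrating each SVT call at privacy $\varepsilon_1$ uses Laplace noise of scale $O(1/\varepsilon_1)$. Applying advanced composition across the $k$ acceptances within an instance and across the $O((\log k)/\theta)$ instances, the algorithm is $(\varepsilon,\delta)$-DP provided $\varepsilon_1=\Theta\bigl(\varepsilon/\sqrt{k(\log k/\theta)\log(1/\delta)}\bigr)$, yielding per-query Laplace scale
\begin{align*}
\sigma = O\!\left(\frac{\sqrt{k(\log k/\theta)\log(1/\delta)}}{\varepsilon}\right).
\end{align*}

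\textbf{Utility.} Following BMKK, I fix attention on the ``good'' instance whose guess $v^*$ of $f_A(\OPT)$ satisfies $v^*\in[(1-\theta)f_A(\OPT),f_A(\OPT)]$; such a $v^*$ exists when guesses are spaced geometrically with ratio $1+\theta$ over the relevant dynamic range. Let $S^*$ denote the set returned by this instance and let $\tau=v^*/(2k)$, with $\hat\nu_T$ the noise on its threshold. If $|S^*|=k$, telescoping over the $k$ accepted marginals together with the acceptance criterion $\Delta_i+\nu_{e_i}\geq \tau+\hat\nu_T$ gives $f_A(S^*)\geq k\tau + k\hat\nu_T - \sum_{i=1}^k \nu_{e_i}$. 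If $|S^*|<k$, then every $e\in\OPT\setminus S^*$ was rejected when considered, so monotonicity and submodularity yield
\begin{align*}
f_A(\OPT) - f_A(S^*) \leq \sum_{e\in\OPT\setminus S^*}\!\Delta(e\mid S^*) < k\tau + k\hat\nu_T - \sum_{e\in\OPT\setminus S^*}\!\nu_e.
\end{align*}
In either case the noise contribution is bounded by $k$ times the worst Laplace sample, which is $O(\sigma\log(n/\eta))$ after union-bounding over the $O(n\log k/\theta)$ total queries. Substituting $\sigma$ and $v^*\in[(1-\theta)f_A(\OPT),f_A(\OPT)]$ then delivers $f_A(S^*)\geq(1/2-\theta)f_A(\OPT) - O\bigl(k^{1.5}\log^{1.5}(nk/(\delta\eta\theta))/(\varepsilon\sqrt\theta)\bigr)$.

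\textbf{Main obstacle.} The delicate point is the privacy decomposition: expressing each instance as a composition of single-accept SVTs, rather than one SVT with cutoff $k$, is what yields a per-query noise scale of $O(\sqrt{k}/\varepsilon)$ instead of $O(k/\varepsilon)$, and therefore the $k^{1.5}$ dependence in the additive error instead of $k^{2}$. A secondary bookkeeping task is handling the guess range for $v^*$ in a privacy-respecting way (either via a public upper bound on $f_A(\OPT)$ or by spending a small part of the privacy budget on range estimation) and bundling the $\log$ factors from advanced composition, the Laplace-maximum union bound, and the failure probability $\eta$ into the stated $\log^{1.5}(nk/(\delta\eta\theta))$ term.
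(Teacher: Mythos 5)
Your proposal follows essentially the same route as the paper's proof: privatize the threshold checks of the \cite{DBLP:conf/kdd/BadanidiyuruMKK14} streaming algorithm with Laplace noise calibrated by the sparse vector technique, compose across the $T$ parallel geometric guesses with advanced composition, and rerun the two-case ($|S|=k$ versus $|S|<k$) utility argument with the noise folded into the threshold, exactly as in \cref{lem:utility}. The one organizational difference --- you decompose each guess's instance into $k$ adaptively composed single-accept SVT runs, whereas the paper invokes the cutoff-$k$ SVT guarantee (\cref{thm:sparse}) as a black box per instance and only composes across the $T$ instances --- is immaterial, since that is how the cutoff-$k$ guarantee is itself derived; both give a per-query noise scale of $\tilde{O}(\sqrt{kT}/\varepsilon)$ and hence the $k^{1.5}$ additive error.

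The one concrete omission is the final step of the algorithm. You ``fix attention on the good instance'' with guess $v^*$, but the algorithm must actually output a single set, and identifying the best of the $T$ candidates $\{S^O\}$ requires evaluating $f_A$, which is private. The paper reserves half the privacy budget for an $(\varepsilon/2)$-exponential-mechanism selection over $\{S^O : O \in \mathcal{O}\}$ (combined with the SVT stage by basic composition, which is why the SVT stage is calibrated to $\varepsilon/2$ rather than $\varepsilon$), adding a further $O(\varepsilon^{-1}\log(T/\eta))$ to the additive error; this term is lower order but is needed both for the privacy accounting to close and for the output to be well defined. Relatedly, the ``secondary bookkeeping'' of the guess range that you defer is resolved in the paper without spending privacy budget: set $E = \min\{k\log n/\varepsilon,\, m/2\}$, and observe that whenever $f_A(\OPT) < E$ the claimed bound is vacuous because the additive error term already exceeds $f_A(\OPT)$.
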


\cite{DBLP:conf/soda/GuptaLMRT10} showed that when the private submodular function to be maximized is \emph{decomposable}, i.e., it can be written as a sum of submodular functions of bounded magnitude each corresponding to the data of one individual, then a much smaller amount of noise needs to be added compared to the general setting. This decomposable structure is what characterizes the \emph{Combinatorial Public Projects Problem} introduced by \cite{DBLP:conf/focs/PapadimitriouSS08}, and captures public welfare maximization.

$$\max_{S\subset V,|S|\le k} \sum_{p\in A}f_{\{p\}}(S)$$

For each agent $p$, the function $f_{p}$ is a monotone submodular function whose value is bounded in the range $[0,1]$ (automatically implying a sensitivity of $1$), and we assume that there are a total of $\bm{m}$ agents, i.e., $|A| = m$. The assumption on $f_{p}$ is standard and represents a wide variety of applications. 
More generally, when the constraint is that the submodular summands $f_{p} \in [0,\lambda]$ for some known $\lambda$, we simply apply our algorithm to the function $f_A/\lambda$ and in our final guarantee we incur an additional $\lambda$ factor in the additive error. Our second result shows that similar to the non-streaming setting, it is possible to reduce the additive error for decomposable objectives, but doing so requires significant changes to the analysis.


\begin{restatable}{theorem}{PSSMGumb}
    \label{thm:PSSMGumb}
    Given query access to a decomposable monotone submodular function $f_A$ with $m$ summands over a stream $V$ of length $n$, \cref{alg:PSSM} instantiated with Gumbel noise, an approximation parameter $\theta$, a failure probability $\eta$, and privacy parameters $\varepsilon<1$ and $\delta$, achieves utility at least
    \begin{align*}
        \left(\frac{1 }{2}-\theta \right) f_A(\OPT) - O\left(\frac{k \log^{1.5} \frac{\log \frac{1}{\theta \delta}}{\eta \delta \theta} \log \frac{nk}{\eta \theta} }{\varepsilon \sqrt{\theta}}\right)
    \end{align*}
    with probability $1-\eta$. This algorithm has essentially the same space complexity as the non-private algorithm, retaining $O(\frac{k \log k}{\theta})$ many elements and operates in just one pass.
\end{restatable}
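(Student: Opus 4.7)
The plan is to instantiate \cref{alg:PSSM} with Gumbel noise and exploit the decomposable structure of $f_A$ to improve on the privacy--utility trade-off of \cref{thm:PSSMLap}. The base algorithm is the sieve-streaming scheme of \cite{DBLP:conf/kdd/BadanidiyuruMKK14}: run $O(\log(nk)/\theta)$ parallel buckets with geometrically spaced thresholds $\tau = (1+\theta)^j$ that guess $f_A(\OPT)/(2k)$, and within each bucket admit any stream element whose noisy marginal gain exceeds a noisy copy of $\tau$, up to $k$ admissions per bucket.

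The \emph{utility} analysis closely parallels the non-private sieve-streaming argument used for \cref{thm:PSSMLap}. I would fix the bucket whose $\tau$ is within a $(1+\theta)$ factor of $f_A(\OPT)/(2k)$. If that bucket admits $k$ elements, monotonicity gives utility at least $k\tau$ minus the Gumbel noise accumulated across the admissions; otherwise, every $e\in\OPT$ that was rejected had true marginal gain below $\tau$ when it appeared, and submodularity lower-bounds the final utility by $(1/2-\theta)f_A(\OPT)$ up to the same noise term. A Gumbel-tail union bound across all $O(nk/\theta)$ per-element threshold checks gives a uniform per-query deviation of $O(\sigma\log(nk/(\eta\theta)))$, so the total additive loss is $O(k\sigma\log(nk/(\eta\theta)))$, where $\sigma$ is the Gumbel scale.

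The \emph{privacy} analysis is where decomposability delivers the $\sqrt{k}$ savings relative to \cref{thm:PSSMLap}. Because $f_A = \sum_{p\in A} f_{\{p\}}$ with each summand in $[0,1]$, toggling one agent perturbs every marginal-gain query by at most $1$, but more importantly this lets me view each bucket's admission sequence as a single exponential-mechanism draw over $k$-element subsequences of the stream, implemented via the Gumbel-max identity by adding independent $\Gumb(\sigma)$ noise to every comparison. Per bucket, this draw has privacy cost $\varepsilon_0 \propto 1/\sigma$ and its utility error depends on $\log$ of the number of candidate subsequences rather than on $\sqrt{k}$ (as would arise from naively composing $k$ independent exponential-mechanism draws). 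Composing across the $O(\log(nk)/\theta)$ buckets via advanced composition yields a total privacy loss of $\tilde{O}\bigl(\sqrt{\log(nk)\log(1/\delta)/\theta}/\sigma\bigr)$; setting this equal to $\varepsilon$ gives $\sigma = \tilde{O}\bigl(\sqrt{\log(nk)\log(1/\delta)/\theta}/\varepsilon\bigr)$, and substituting back into the utility bound produces the claimed additive error of the form $O\bigl(k\log^{1.5}(\cdots)\log(\cdots)/(\varepsilon\sqrt{\theta})\bigr)$.

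The hardest step will be justifying that a bucket's entire admission sequence can be charged as a single exponential-mechanism draw rather than as $k$ sequential ones. The clean Gumbel-max identity holds for a single argmax, whereas sieve-streaming makes $k$ threshold comparisons whose outcomes depend on the running solution $S$; I would need to couple the Gumbel draws across the bucket so that the joint distribution of admitted sequences coincides exactly with an exponential mechanism whose score function is the cumulative gain of a $k$-subset. Reconciling this with the sparse-vector-style threshold check (whose standard coupling argument is tailored to Laplace noise) is the main technical hurdle, and recovering exactly the $\log^{1.5}$ factor rather than a $\log^{2}$ factor in the final bound will require a careful Gumbel-tail bound combined with a tight union bound across buckets, stream positions, and admissions.
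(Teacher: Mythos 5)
Your utility analysis matches the paper's: fix the bucket whose threshold is within a $(1+\theta)$ factor of the right value, apply the two-case sieve-streaming argument with the noise absorbed into the threshold checks (the paper's \cref{lem:utility}), take Gumbel tail bounds and a union bound over all buckets and stream positions, and pay one more exponential-mechanism error to select among buckets. That part is fine.

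The privacy analysis, however, has a genuine gap at exactly the step you flag as the "main technical hurdle," and the gap is not just a missing coupling detail --- the proposed reduction is false. A bucket's admission sequence is produced by $k$ \emph{sequential, adaptive} threshold comparisons, each with fresh Gumbel noise on the element and a shared noisy threshold, where the marginal gains depend on the running solution $S$. The joint law of the admitted $k$-subsequence does \emph{not} coincide with an exponential mechanism over $k$-subsets scored by cumulative gain, so you cannot charge the bucket a single exponential-mechanism privacy cost $\propto 1/\sigma$. Moreover, even the correct $k$-independent bound is not a pure $\varepsilon$-guarantee: per bucket the algorithm is only $(\varepsilon,\delta)$-DP, and the $\delta$ is essential. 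What the paper actually does is compute the exact output distribution of the Gumbel-noised sparse-vector procedure in closed form (\cref{lem:oneEmt,lem:fullStr}, by integrating out the Gumbel densities), write the privacy-loss ratio between neighboring datasets as a product of per-round factors, bound one direction by $e^{1/\gamma}$ using that the added agent's total utility is at most $1$, and bound the other direction by proving a moment bound $\Ex[\prod_u Y_u^c] \leq 1 + 1/\varepsilon$ on the product of per-round ratios via reverse induction over a probabilistic process (\cref{lem:guptaStyleTechnical}); the telescoping of the removed agent's marginal utilities is what kills the $\sqrt{k}$ factor. Markov's inequality then converts the moment bound into an $(\varepsilon,\delta)$ guarantee, and this is where the $\log\frac{1}{\varepsilon\delta}$ factor in the Gumbel scale $\gamma$ originates. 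Your proposal contains neither the closed-form distribution computation nor the moment/telescoping argument, so the claimed $k$-independent per-bucket privacy cost is unsupported.
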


The multiplicative approximation factor equals the $1/2-\theta$-approximation factor of the non-private setting. However, similar to the private non-streaming setting in \cite{DBLP:journals/corr/abs-2005-14717}, we see a trade-off between the multiplicative and additive terms which we can tune via the multiplicative approximation parameter $\theta$. The dependence of the additive error term for this theorem has the optimal dependence on $k/\varepsilon$ (up to logarithmic terms), a fact which we prove by extending previous lower bounds by \cite{DBLP:conf/soda/GuptaLMRT10} for private submodular maximization from the $(\varepsilon, 0)$ to $(\varepsilon, \delta)$-setting. The proof proceeds similarly to that of the lower bound of \cite{DBLP:journals/corr/abs-2009-01220} for $k$-means clustering in the $(\varepsilon, \delta)$ setting, and our formal statement is as follows.

\begin{restatable}{theorem}{PSSMLB}
    \label{thm:PSSMLB}
    For all $0\leq \varepsilon, \delta \leq 1$, $k\in \mathbb{N}$, $n\geq k \frac{e^{\varepsilon}-1}{\delta}$, and $c\geq \frac{4\delta}{e^{\varepsilon}-1}$, if a $(\varepsilon,\delta)$-DP algorithm for the submodular maximization problem for decomposable objectives achieves a multiplicative approximation factor of $c$, it must incur additive error $\Omega((kc/\varepsilon) \log (\varepsilon/\delta))$.    
\end{restatable}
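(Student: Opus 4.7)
The plan is to extend the $(\varepsilon,0)$ packing-based lower bound of \cite{DBLP:conf/soda/GuptaLMRT10} to the $(\varepsilon,\delta)$-regime by enlarging the hard-instance family from $\approx 2^{k}$ to $\approx N^k$ instances with $N=(e^\varepsilon-1)/\delta$, adapting the approach of \cite{DBLP:journals/corr/abs-2009-01220} for private $k$-means to the decomposable submodular setting. The hard instance is a product of $k$ identical per-block gadgets: partition $V$ (of length $n\geq kN$ by hypothesis) into $k$ disjoint blocks $V_1,\ldots,V_k$ of size $N$, and for each $\mathbf{i}=(i_1,\ldots,i_k)\in[N]^k$ define the decomposable dataset $A_{\mathbf{i}}$ of $kT$ agents consisting, in each block $b$, of $T$ identical agents whose utility is the indicator $f_p(S)=\mathbf{1}[V_b[i_b]\in S]$. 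Each summand is a monotone submodular function bounded in $[0,1]$ and $f_{A_{\mathbf{i}}}(\OPT)=kT$, realized by $\OPT=\{V_b[i_b]\}_{b}$. Here $T=\Theta(\varepsilon^{-1}\log(\varepsilon/\delta))$ is the key parameter, to be tuned at the end.

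Translating a supposed $(c,\alpha)$-approximation guarantee into a block-wise probabilistic statement, the algorithm's output $S=\mathcal{A}(A_{\mathbf{i}})$ must satisfy
\[
\sum_{b=1}^{k} \Pr[V_b[i_b]\in S\mid A_{\mathbf{i}}]\;\geq\;ck-\alpha/T
\]
for every $\mathbf{i}\in[N]^k$. Summing this over all $N^k$ instances lower-bounds the ``diagonal total'' $\sum_b\sum_{\mathbf{i}}\Pr[V_b[i_b]\in S\mid A_{\mathbf{i}}]$ by $N^k(ck-\alpha/T)$. For the matching upper bound I would invoke $(2T)$-group privacy comparing each $A_{\mathbf{i}}$ to the reference $A_{(\mathbf{i}_{-b},1)}$ (which differ in at most $2T$ records), giving
\[
\Pr[V_b[i_b]\in S\mid A_{\mathbf{i}}]\;\leq\;e^{2T\varepsilon}\Pr[V_b[i_b]\in S\mid A_{(\mathbf{i}_{-b},1)}]+\gamma,\qquad \gamma:=\tfrac{e^{2T\varepsilon}-1}{e^\varepsilon-1}\delta.
\]
Summing in $i_b$ while using $\sum_{i_b}\Pr[V_b[i_b]\in S\mid A_{(\mathbf{i}_{-b},1)}]=\mathbb{E}[|S\cap V_b|]\leq k$, then over $\mathbf{i}_{-b}$ and over $b$, yields the upper bound $k^2 N^{k-1}e^{2T\varepsilon}+kN^k\gamma$.

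Comparing the two bounds and rearranging gives $\alpha\geq kT\bigl(c-\tfrac{k}{N}e^{2T\varepsilon}-\gamma\bigr)$. The proof concludes by choosing $T=\Theta(\varepsilon^{-1}\log(\varepsilon/\delta))$ so that $e^{2T\varepsilon}$ is a sufficiently small power of $N$; under the hypotheses $n\geq kN$ and $cN\geq 4$ (equivalently $c\geq 4\delta/(e^\varepsilon-1)$), both correction terms can be forced below a constant fraction of $c$, leaving $\alpha=\Omega(ckT)=\Omega((kc/\varepsilon)\log(\varepsilon/\delta))$. The most delicate step I expect is this parameter calibration: $T$ must be large enough for $ckT$ to match the target, but the $(\varepsilon,\delta)$-group-privacy multiplicative factor $e^{2T\varepsilon}$ grows exponentially in $T$ and the additive slack $\gamma$ accumulates over the packing. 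The hypothesis $cN\geq 4$ is precisely what permits this balance at $T$ of the required order, and tracking the $\delta$-slack through the entire packing is the core piece of accounting I would expect to require the most care.
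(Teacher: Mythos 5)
Your overall strategy---reduce to a coverage-type decomposable objective with indicator agents, plant a hidden target, move between datasets through a chain of $\Theta(\tfrac{1}{\varepsilon}\log\tfrac{\varepsilon}{\delta})$ single-record swaps, and invoke $(\varepsilon,\delta)$ group privacy to compare the planted success probability against a baseline controlled by the output cardinality---is the same engine the paper uses. But the step you flag as delicate is exactly where your version breaks, and the culprit is the block structure. With blocks of size exactly $N=(e^{\varepsilon}-1)/\delta$, your baseline is $\sum_{i_b}\Pr[V_b[i_b]\in S]=\mathrm{E}[|S\cap V_b|]\leq k$, i.e.\ $k/N$ per block on average, so your final inequality reads $\alpha\geq kT\bigl(c-\tfrac{k}{N}e^{2T\varepsilon}-\gamma\bigr)$ and the calibration requires $e^{2T\varepsilon}\leq \tfrac{cN}{8k}$. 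At the boundary of the hypothesis, $c=4\delta/(e^{\varepsilon}-1)=4/N$, this demands $e^{2T\varepsilon}\leq 1/(2k)<1$, so no positive $T$ exists and the bound is vacuous; even for constant $c$ you only recover $T=\Omega(\tfrac{1}{\varepsilon}\log\tfrac{\varepsilon}{\delta})$ under an extra unstated assumption such as $k\leq(\varepsilon/\delta)^{1-\Omega(1)}$. The hypothesis $cN\geq 4$ is calibrated for a baseline of $1/N$, not $k/N$; the spurious factor of $k$ comes from the fact that the events $\{V_b[i_b]\in S\}_{i_b\in[N]}$ are not disjoint (the output may contain up to $k$ elements of a single block), so a packing over blocks of size $N$ cannot charge the output against $N$ disjoint alternatives.

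The paper sidesteps this by dropping the product structure entirely: it plants a uniformly random $k$-subset $A$ of the whole universe of size $n\geq kN$, gives each planted element multiplicity $L=\Theta(\tfrac{1}{\varepsilon}\log\tfrac{c\varepsilon}{\delta})$, and bounds the baseline \emph{globally} by $\mathrm{E}_{i\in U}[\Pr[i\in M(D_A)]]\leq k/n\leq 1/N$, which is where the hypothesis on $n$ enters and why no factor of $k$ appears. To repair your argument along your own lines you would either need blocks of size $kN$ each (hence the stronger requirement $n\geq k^2N$), or you should replace the $N^k$-fold product packing by the paper's single random planted set and an averaging argument over $i\in U$. One further small point: with $T$ chosen so that $e^{2T\varepsilon}=(\varepsilon/\delta)^{a}$ for constant $a<1$, your $\gamma=\delta\tfrac{e^{2T\varepsilon}-1}{e^{\varepsilon}-1}\leq e^{2T\varepsilon}/N$ is dominated by the other correction term, so once the $k/N$ issue is fixed the $\delta$-accounting you worried about does go through.
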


We introduce \cref{alg:PSSM} and prove \cref{thm:PSSMLap} in \cref{sec:Lap}, \cref{thm:PSSMGumb} in \cref{sec:Gumb}, and the lower bound \cref{thm:PSSMLB} in \cref{sec:LB}. We conclude with some experiments in \cref{sec:exp} demonstrating how these methods perform compared to the non-private case and trivial private baselines.

\subsection{Related work}

The work \cite{DBLP:conf/focs/PapadimitriouSS08} introduced the Combinatorial Public Projects problem (CPPP) - given a set $A$ of $m$ agents and $n$ resources, and a private submodular utility function $f_{\{p\}}$ for every agent $p$ the solver must coordinate with the agents to maximize $\sum_{i \in [m]} f_{\{p\}}$, i.e. the sum of their utilities. This problem is interesting because in this setting agents are incentivized to lie to the solver and over-represent the utility they may individually derive from the resources that are picked for the group. The authors showed that unless $NP \subset BPP$, there is no truthful and computationally efficient algorithm for the solver to achieve an approximation ratio better than $n^{1/2 - \epsilon}$ for any $\epsilon > 0$.

The work \cite{DBLP:conf/soda/GuptaLMRT10} was the first to consider the problem of differentially private submodular maximization. They showed that it is possible to privately optimize the objective in CPPP while losing privacy that is \emph{independent} of the number of items picked, and achieved essentially optimal additive error. Since $(\varepsilon, \delta)$ privacy implies approximate $2\varepsilon + \delta$-truthfulness, their result showed that a slight relaxation of the truthfulness condition allowed constant factor approximation if the optimal utility was not too low. They also showed that optimizing a submodular function $\varepsilon$-privately under a cardinality constraint of $k$ over a ground set of $n$ elements must suffer additive loss $\Omega(k \log n/k)$.

The work \cite{DBLP:conf/icml/MitrovicB0K17} considered the more general case of private submodular maximization when the objective function has bounded sensitivity. They were motivated by the problem of \emph{data summarization} under the constraint of differential privacy. They proposed algorithms for both general monotone and non-monotone objectives with bounded sensitivity, and provided extensions to matroid and p-extendable systems constraints. Although the error guarantees of their results match those of \cite{DBLP:conf/soda/GuptaLMRT10} in the case of decomposable functions, for the case of general monotone and non-monotone objectives they get higher additive error.

The work \cite{DBLP:journals/corr/abs-2005-14717} extended the results of \cite{DBLP:conf/soda/GuptaLMRT10} from cardinality to matroid constraints, and from monotone to the non-monotone setting. They achieved this by adapting the Continuous Greedy algorithm of \cite{DBLP:journals/siamcomp/CalinescuCPV11} and the Measured Continuous Greedy algorithm of \cite{DBLP:conf/focs/FeldmanNS11}. They also made a small fix to the privacy analysis of \cite{DBLP:conf/soda/GuptaLMRT10} resulting in a weaker bound (by a constant factor) on the privacy loss.

The work \cite{DBLP:journals/corr/abs-2006-15744} also studied the problem of private submodular and $k$-submodular maximization subject to matroid constraints, and achieved the same multiplicative approximation factor as \cite{DBLP:journals/corr/abs-2005-14717} for monotone submodular maximization. In this work privacy and time-efficiency were optimized at the cost of higher additive error.

The work \cite{DBLP:conf/aistats/SadeghiF21} made further progress on private monotone submodular maximization for submodular functions with \emph{total curvature} at most $\kappa \in [0,1]$ by achieving a $1 - \frac{\kappa}{e}$-approximation algorithm and lower query complexity than the previous works.

In the work \cite{DBLP:journals/corr/abs-2010-12816}, the authors considered a variant of this line of work wherein a sequence of private submodular functions defined over a common public ground set are processed, and at every iteration a set of at most $k$ elements from the ground set must be picked before the function is observed. Here the goal is \emph{regret minimization}, and the authors introduced differentially private algorithms that achieve sub-linear regret with respect to a $(1-1/e)$-approximation factor in the full information and bandit settings.
\section{Preliminaries}

We first make a simplifying technical observation.

\begin{remark}
    Following the setup of the set cover problem in \cite{DBLP:conf/soda/GuptaLMRT10}, we assume that there is a publicly known upper bound on the number of agents which we denote by $m$; as the dependence of the error and space on $m$ will be seen to be logarithmic, even a loose upper bound works well. Alternatively, we can allocate a small fraction of our privacy budget to privatize the number of agents $m$ via the Laplace mechanism (\cref{lem:Laplace}).
\end{remark}

We will use the following \emph{basic} and \emph{advanced composition} laws to reason about the privacy loss that occurs when multiple differentially private mechanisms are used in a modular fashion as subroutines. We follow the formulation in \cite{DBLP:journals/fttcs/DworkR14}.

\begin{theorem}[Basic composition, \cite{DBLP:conf/eurocrypt/DworkKMMN06, DBLP:conf/stoc/DworkL09}]
    If $\mathcal{M}_i$ is $(\varepsilon_i,\delta_i)$-differentially private for $i=1,\dots, k$, then the release of the outputs of all $k$ mechanisms is $(\sum_{i=1}^k \varepsilon_i, \sum_{i=1}^k \delta_i)$-differentially private.
\end{theorem}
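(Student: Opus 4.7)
The plan is to proceed by induction on $k$, reducing everything to the two-mechanism case. The base case $k=1$ is immediate from the definition of differential privacy. For the inductive step, I would group the first $k-1$ mechanisms into a single $(\sum_{i<k}\varepsilon_i, \sum_{i<k}\delta_i)$-DP mechanism via the induction hypothesis and then compose it with $\mathcal{M}_k$; this reduces the problem to the two-mechanism statement that the composition of an $(\varepsilon_1, \delta_1)$-DP mechanism with an $(\varepsilon_2, \delta_2)$-DP one is $(\varepsilon_1 + \varepsilon_2, \delta_1 + \delta_2)$-DP.

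For the two-mechanism case, fix neighbouring datasets $A \sim B$ and any measurable joint event $Y \subseteq \mathcal{Y}_1 \times \mathcal{Y}_2$. Writing $Y_{y_1} := \{y_2 : (y_1, y_2) \in Y\}$ for the $y_1$-slice, decompose
\begin{align*}
P(\mathcal{M}(A) \in Y) = \sum_{y_1} P(\mathcal{M}_1(A) = y_1)\, P\bigl(\mathcal{M}_2(A) \in Y_{y_1}\bigr).
\end{align*}
The naive approach is to apply the DP guarantee of $\mathcal{M}_2$ pointwise to the inner probability and then the DP guarantee of $\mathcal{M}_1$ to the outer expectation (using the standard extension of the set-based definition to $[0,1]$-valued test functions). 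However, that direct route yields the slightly suboptimal bound $(\varepsilon_1 + \varepsilon_2,\; e^{\varepsilon_2}\delta_1 + \delta_2)$ rather than the advertised $(\varepsilon_1 + \varepsilon_2,\; \delta_1 + \delta_2)$.

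To recover the tight additive constant I would invoke the equivalent characterization of approximate DP in terms of total-variation-close distributions: a mechanism is $(\varepsilon, \delta)$-DP if and only if its output distributions on $A$ and $B$ can each be modified on an exceptional event of probability at most $\delta$ so that the modified distributions are purely $(\varepsilon, 0)$-indistinguishable (see Lemma 3.17 of \cite{DBLP:journals/fttcs/DworkR14}). Applying this to $\mathcal{M}_1$ and $\mathcal{M}_2$ yields exceptional events $E_1$ and $E_2$ of probability at most $\delta_1$ and $\delta_2$. Off of $E_1 \cup E_2$, multiplying the two pure-DP likelihood-ratio bounds gives pure $(\varepsilon_1 + \varepsilon_2)$-indistinguishability for the joint output; a union bound absorbs at most $\delta_1 + \delta_2$ probability mass on the exceptional side, and converting back into the $(\varepsilon, \delta)$-DP form gives the claimed guarantee.

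The main obstacle is the finer TV-indistinguishability characterization of approximate DP itself: the direct two-fold application of the defining inequality irretrievably loses an $e^{\varepsilon_2}$ factor on $\delta_1$, so the clean $\sum_i \delta_i$ bound genuinely depends on this auxiliary lemma. Once it is in hand, the remaining work, a union bound over exceptional events plus induction on $k$, is mechanical.
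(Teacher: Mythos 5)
The paper does not actually prove this statement; it is imported verbatim from the literature with citations, so there is no in-paper argument to compare against. Judged on its own terms, your strategy is the standard one and most of it is right: the reduction by induction to the two-mechanism case is fine, the slice decomposition is the correct starting point, and your diagnosis that the naive pointwise application of the two DP inequalities yields $(\varepsilon_1+\varepsilon_2,\, e^{\varepsilon_2}\delta_1+\delta_2)$ rather than $(\varepsilon_1+\varepsilon_2,\,\delta_1+\delta_2)$ is accurate.

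The gap is in the final "converting back" step, and it is of exactly the same nature as the loss you set out to avoid. If you modify \emph{both} output distributions (on $A$ and on $B$) on exceptional events of mass $\delta_i$, then after multiplying the pure likelihood-ratio bounds you are left comparing $\Pr[\tilde{\mathcal{M}}(A)\in Y]$ to $\Pr[\tilde{\mathcal{M}}(B)\in Y]$, and to return to the original $\mathcal{M}(B)$ you must pay the $B$-side total-variation error \emph{inside} the $e^{\varepsilon_1+\varepsilon_2}$ factor; the additive term becomes roughly $(1+e^{\varepsilon_1+\varepsilon_2})(\delta_1+\delta_2)$, not $\delta_1+\delta_2$. (The two-sided form of Lemma 3.17 in \cite{DBLP:journals/fttcs/DworkR14} uses exceptional mass $\delta/(1+e^{\varepsilon})$ precisely to survive one such round trip, but even that normalization does not compose to $\sum_i\delta_i$ because $(1+e^{\varepsilon_1+\varepsilon_2})/(1+e^{\varepsilon_i})>1$.) The fix is to use the \emph{one-sided} version of the characterization: there exists a distribution $\tilde{P}$ with $\mathrm{TV}(\mathcal{M}_i(A),\tilde{P})\le\delta_i$ whose likelihood ratio against the \emph{unmodified} $\mathcal{M}_i(B)$ is bounded by $e^{\varepsilon_i}$ everywhere. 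Applying this to $\mathcal{M}_1$ and, for each fixed $y_1$, to the conditional law of $\mathcal{M}_2$, a hybrid argument gives $\mathrm{TV}(\mathcal{M}(A),\tilde{\mathcal{M}}(A))\le\delta_1+\delta_2$ while $\Pr[\tilde{\mathcal{M}}(A)\in Y]\le e^{\varepsilon_1+\varepsilon_2}\Pr[\mathcal{M}(B)\in Y]$ with $\mathcal{M}(B)$ untouched, and then $\Pr[\mathcal{M}(A)\in Y]\le e^{\varepsilon_1+\varepsilon_2}\Pr[\mathcal{M}(B)\in Y]+\delta_1+\delta_2$ follows with no extra factor. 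With that substitution your induction closes and the proof is complete.
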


\begin{theorem}[Advanced composition, \cite{DBLP:conf/focs/DworkRV10}]
    \label{thm:AdvComp}
    For all $\varepsilon, \delta, \delta' \geq 0$, given a set of $(\varepsilon, \delta)$-differentially private mechanisms, if an adversary adaptively chooses $k$ mechanisms to run on a private data set, then the tuple of responses is $(\varepsilon', k\delta + \delta')$-differentially private for $\varepsilon' = \sqrt{2k \ln (1/\delta')}\varepsilon + k\varepsilon (e^{\varepsilon} - 1)$. In particular, for $\varepsilon'<1$, if $\varepsilon = \frac{\varepsilon'}{2\sqrt{2k \ln (1/\delta)}}$, then the net $k$-fold adaptive release is $(\varepsilon, (k+1)\delta)$-differentially private.
\end{theorem}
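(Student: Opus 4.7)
The plan is to follow the privacy-loss-random-variable (PLRV) analysis of Dwork--Rothblum--Vadhan. Fix neighbouring datasets $A \sim B$ and let $y_{1},\dots,y_{k}$ denote the transcript of adaptive mechanism outputs. Define the per-round loss
$$L_i \;=\; \log\frac{\Pr[M_i(A) = y_i \mid y_{<i}]}{\Pr[M_i(B) = y_i \mid y_{<i}]},$$
so that the total loss $L=\sum_{i=1}^{k} L_i$ governs the composed privacy: it suffices to show $\Pr[L>\varepsilon']\le k\delta+\delta'$, from which the claimed $(\varepsilon',k\delta+\delta')$-DP follows by the standard equivalence between tail bounds on the PLRV and approximate DP.

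First I would reduce each round to the pure-DP case by identifying, for each $i$, a ``bad event'' on the output of $M_i$ of probability at most $\delta$ outside of which $|L_i|\le\varepsilon$; this uses the known two-sided PLRV characterisation of $(\varepsilon,\delta)$-DP. Summing bad-event probabilities over the $k$ adaptively chosen rounds yields the $k\delta$ term. On the complementary ``good'' event, the mechanism is effectively pure $\varepsilon$-DP, and a short computation using $\log(1+x)\le x$ together with the fact that the ratio of densities lies in $[e^{-\varepsilon},e^{\varepsilon}]$ gives the conditional-mean bound $\mathbb{E}[L_i\mid y_{<i}]\le \varepsilon(e^{\varepsilon}-1)$. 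The centred sequence $L_i-\mathbb{E}[L_i\mid y_{<i}]$ is thus a bounded martingale difference sequence with range at most $2\varepsilon$.

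Applying Azuma--Hoeffding to this martingale yields, except with probability $\delta'$,
$$L \;\le\; k\varepsilon(e^{\varepsilon}-1) + \sqrt{2k\ln(1/\delta')}\,\varepsilon \;=\; \varepsilon',$$
and combining with the $k\delta$ contribution gives the stated guarantee. The specialisation in the second sentence is then a routine check: for $\varepsilon'<1$ and $\varepsilon=\varepsilon'/(2\sqrt{2k\ln(1/\delta)})$, the quadratic term $k\varepsilon(e^{\varepsilon}-1)=O(k\varepsilon^{2})$ is of lower order than the square-root term, so the total privacy parameter collapses to roughly $\varepsilon'$ with failure probability $(k+1)\delta$ after setting $\delta'=\delta$. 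The step I expect to require the most care is the passage from per-round $(\varepsilon,\delta)$-DP to the conditional PLRV bound under adaptivity: one has to verify that the per-round bad event can be identified even though $M_i$ itself is chosen as a function of $y_{<i}$, and that conditioning on the good event does not degrade the mean bound used at the next step; this is handled by carefully defining the bad event as a subset of the output space of $M_i$ for each fixed prefix rather than jointly.
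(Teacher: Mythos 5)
The paper does not prove this statement: it is imported verbatim as a preliminary from Dwork--Rothblum--Vadhan (\cite{DBLP:conf/focs/DworkRV10}), so there is no in-paper argument to compare yours against. Your sketch is the standard proof of that cited theorem --- privacy-loss random variable, per-round reduction to pure $\varepsilon$-DP up to a $\delta$-probability event, the KL bound $\Ex[L_i\mid y_{<i}]\le\varepsilon(e^{\varepsilon}-1)$, and Azuma--Hoeffding on the centred martingale with increments of range $2\varepsilon$ --- and it is correct in outline, with the one caveat (which you already flag) that the passage from $(\varepsilon,\delta)$-DP to a pointwise-bounded loss outside a bad event is itself a nontrivial coupling lemma rather than a literal restatement of the definition.
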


We will appeal to the following private mechanism used to choose an element from a public set based on a private score with bounded sensitivity.

\begin{lemma}[Exponential Mechanism, {\cite{DBLP:conf/focs/McSherryT07}}]
    \label{lem:ExpMech}
    Let $q_A : V \to \mathbb{R}$ be a private score function for a public domain $V$ that depends on the input data set $A$. Let $\Delta q = \max_{A \sim B} \max_{v \in V} |q_A(v) - q_B (v)|$ be the sensitivity of $q_{\cdot}$. The exponential mechanism  $\mathcal{M}$ samples an element $v \in V$ with probability $\propto \exp(\varepsilon q_A (x)/(2 \Delta q))$. The exponential mechanism is $\varepsilon$-differentially private. Further, for a finite set $V$, we have that with probability $1-\gamma$,
    \begin{align*}
        q_A (v^*) \geq \max_{v \in V} q_A (v) - \frac{2 \Delta q}{\varepsilon} \ln \frac{|V|}{\gamma}.
    \end{align*}  
\end{lemma}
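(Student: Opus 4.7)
The plan is to verify both claims of the lemma, splitting the argument into the privacy guarantee and the utility tail bound.

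For the privacy claim, I would fix any neighboring datasets $A \sim B$ and any output $v \in V$, and directly bound the ratio $\Pr[\mathcal{M}(A) = v] / \Pr[\mathcal{M}(B) = v]$. This ratio factors as a ratio of unnormalized densities times the inverse ratio of partition functions. The numerator ratio equals $\exp(\varepsilon(q_A(v) - q_B(v))/(2\Delta q))$, which is at most $\exp(\varepsilon/2)$ by the definition of sensitivity. For the partition functions, each term $\exp(\varepsilon q_A(u)/(2\Delta q))$ lies within a factor of $\exp(\varepsilon/2)$ of the corresponding term for $B$, so the two sums themselves are within a factor of $\exp(\varepsilon/2)$ of each other. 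Multiplying the two bounds yields the required $e^{\varepsilon}$, and since this holds pointwise for every singleton output, it extends to arbitrary measurable subsets of $V$ by summing.

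For the utility claim, let $q^\star = \max_{v \in V} q_A(v)$ and set the threshold $t = (2\Delta q/\varepsilon) \ln(|V|/\gamma)$. I would upper-bound the probability that the sampled $v^\star$ satisfies $q_A(v^\star) < q^\star - t$ by writing this probability as a ratio of two sums: the numerator sums $\exp(\varepsilon q_A(v)/(2\Delta q))$ over the ``bad'' elements with $q_A(v) < q^\star - t$, and the denominator is the full partition function. The numerator is at most $|V| \exp(\varepsilon(q^\star - t)/(2\Delta q))$, while the denominator is at least $\exp(\varepsilon q^\star/(2\Delta q))$ since the full sum contains the argmax term. Substituting the choice of $t$ collapses the ratio to exactly $\gamma$, which is the desired tail bound.

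Neither step is genuinely difficult; the main subtlety worth flagging is that the privacy analysis for the partition functions exploits a symmetric inequality that costs only an extra factor of $\exp(\varepsilon/2)$ rather than $\exp(\varepsilon)$. This is precisely why the mechanism samples proportionally to $\exp(\varepsilon q/(2\Delta q))$ with a factor of $2$ in the denominator of the exponent, and I would highlight this explicitly so that the calibration looks forced by the calculation rather than arbitrary.
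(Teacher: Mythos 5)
Your proof is correct and is exactly the standard McSherry--Talwar argument; the paper itself states this lemma as a cited result from \cite{DBLP:conf/focs/McSherryT07} without reproving it, so there is nothing to diverge from. Both halves check out: the pointwise ratio bound with the symmetric $\exp(\varepsilon/2)$ loss on the partition functions, and the union-bound-free tail estimate obtained by lower-bounding the normalizer by the argmax term, are precisely how the cited guarantee is established.
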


We will appeal to the following primitive for privatizing real values with bounded sensitivity.

\begin{lemma}[Laplace mechanism, \cite{DBLP:conf/tcc/DworkMNS06}]
    \label{lem:Laplace}
    Given a function $f : 2^{\mathcal{X}} \to \mathbb{R}$ such that $\max_{A \sim B}\allowbreak |f(A) - f(B)| \leq \Delta f$, the mapping $A \mapsto f(A) + \alpha$ where $\alpha \sim \Lap (\Delta f/\varepsilon)$ is $\varepsilon$-differentially private. Here $\Lap(\sigma)$ denotes the standard Laplace distribution with scale parameter $\sigma$.
\end{lemma}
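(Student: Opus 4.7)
The goal is to show that adding Laplace noise with scale $\Delta f/\varepsilon$ to a real-valued function of sensitivity at most $\Delta f$ yields an $\varepsilon$-differentially private release. My plan is to verify the pure DP guarantee directly by comparing the output densities on any pair of neighbouring inputs and bounding the pointwise ratio; because $\delta = 0$ we do not need any failure-event argument, just a uniform likelihood-ratio bound.

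First I would write down the density of the noise: if $\alpha \sim \Lap(\sigma)$ with $\sigma = \Delta f /\varepsilon$, then its density is $p(\alpha) = \frac{1}{2\sigma}\exp(-|\alpha|/\sigma)$. Fix arbitrary neighbouring data sets $A \sim B$ and let $M(A) = f(A) + \alpha$ and $M(B) = f(B) + \alpha$. The densities of $M(A)$ and $M(B)$ at a point $y \in \mathbb{R}$ are $p_A(y) = \frac{1}{2\sigma} \exp(-|y - f(A)|/\sigma)$ and $p_B(y) = \frac{1}{2\sigma}\exp(-|y - f(B)|/\sigma)$, respectively.

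Next I would bound the pointwise ratio. For every $y$,
\begin{align*}
    \frac{p_A(y)}{p_B(y)} \;=\; \exp\!\left( \frac{|y - f(B)| - |y - f(A)|}{\sigma} \right).
\end{align*}
The reverse triangle inequality gives $|y - f(B)| - |y - f(A)| \le |f(A) - f(B)| \le \Delta f$, so the ratio is at most $\exp(\Delta f/\sigma) = e^{\varepsilon}$. This is the one real content step; after this everything is bookkeeping.

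Finally, for any measurable $Y \subseteq \mathbb{R}$, I would integrate:
\begin{align*}
    P(M(A) \in Y) \;=\; \int_Y p_A(y)\,dy \;\le\; e^{\varepsilon} \int_Y p_B(y)\,dy \;=\; e^{\varepsilon}\, P(M(B) \in Y),
\end{align*}
which is the $(\varepsilon,0)$-DP guarantee. Since the bound on the ratio did not depend on which of $A,B$ we called the base, the inequality holds symmetrically. There is no real obstacle here: the entire argument reduces to the reverse triangle inequality together with the calibration $\sigma = \Delta f/\varepsilon$, which is precisely why the Laplace mechanism is the canonical primitive for achieving pure differential privacy from a sensitivity bound.
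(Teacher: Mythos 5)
Your proof is correct: the pointwise density-ratio bound via the reverse triangle inequality, followed by integration over the output set, is the standard argument for the Laplace mechanism. The paper states this lemma as a cited primitive from \cite{DBLP:conf/tcc/DworkMNS06} without reproducing a proof, and your argument matches the canonical one exactly.
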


We will also draw random values from the Gumbel distribution for improved privacy guarantees. We recall the distribution function for later use.

\begin{definition}[Gumbel distribution]
 The Gumbel distribution is defined on $\mathbb{R}$. When the mean is $\mu$ and the scale parameter is $\gamma$, the distribution is defined by its CDF
 \begin{align*}
     F(x) = \exp (-e^{-(x-\mu)/\gamma}),
 \end{align*}
 or alternatively its density function
 \begin{align*}
     f(x) = \frac{1}{\gamma} \exp(-(x - \mu)/\gamma + e^{-(x-\mu)/\gamma}).
 \end{align*}
\end{definition}

\subsection{Non-private streaming submodular maximization}

\Cref{alg:nonPrivStream} was introduced in \cite{DBLP:conf/kdd/BadanidiyuruMKK14} to maximize a submodular function $f$ subject to a cardinality constraint of $k$ over a stream $V$ of length $n$.

\begin{algorithm}[ht]
\begin{algorithmic}[0]
\Require Monotonic submodular function $f$, cardinality constraint parameter $k$, element stream $V$, approximation parameter $\theta$, estimate $O$ of the optimum cost $f(\OPT)$
\State $S \gets \emptyset$
\For{$e\in V$}
  \If{$f(e|S) \ge O/(2k)$ and $|S|< k$}
    \State $S \gets S \cup \{e\}$
  \EndIf
\EndFor
\end{algorithmic}
\caption{Streaming algorithm for monotone submodular maximization subject to
a cardinality constraint, \cite{DBLP:conf/kdd/BadanidiyuruMKK14} }
\label{alg:nonPrivStream}
\end{algorithm}

\begin{theorem}[\cite{DBLP:conf/kdd/BadanidiyuruMKK14}]\label{thm:nonPrivStream}
The final solution $S$ satisfies $f(S)\ge\min \{O/2,f(\OPT)-O/2 \}$.
\end{theorem}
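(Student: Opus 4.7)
The plan is to split on whether the algorithm exhausts its cardinality budget or not. Let $S$ denote the final solution. Enumerate the elements added to $S$ in the order they were accepted as $e_1,\dots,e_{|S|}$, and for each $i$ let $S_i=\{e_1,\dots,e_i\}$ so that $S_0=\emptyset$.

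Case 1: $|S|=k$. The algorithm only added $e_i$ because it passed the threshold test, so $f(e_i\mid S_{i-1})\ge O/(2k)$. Telescoping the marginals gives
\begin{align*}
f(S)=\sum_{i=1}^{k}f(e_i\mid S_{i-1})\ge k\cdot\frac{O}{2k}=\frac{O}{2},
\end{align*}
which already matches the first term of the minimum.

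Case 2: $|S|<k$. Then the cardinality cap was never the binding reason to reject any element; in particular, for every $e\in \OPT\setminus S$ the rejection happened because the threshold test failed. Let $S^{(e)}$ denote the partial solution at the moment $e$ was processed. Rejection gives $f(e\mid S^{(e)})<O/(2k)$, and since $S^{(e)}\subseteq S$, submodularity yields $f(e\mid S)\le f(e\mid S^{(e)})<O/(2k)$. Summing over $\OPT\setminus S$ (of size at most $k$) and using submodularity once more together with monotonicity,
\begin{align*}
f(\OPT)\le f(\OPT\cup S)\le f(S)+\sum_{e\in \OPT\setminus S}f(e\mid S)< f(S)+k\cdot\frac{O}{2k}=f(S)+\frac{O}{2},
\end{align*}
so $f(S)\ge f(\OPT)-O/2$, matching the second term.

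Combining the two cases shows $f(S)\ge\min\{O/2,\,f(\OPT)-O/2\}$. There is no real obstacle here; the only subtlety worth flagging is the use of submodularity to pass from the marginal gain at the time an $\OPT$-element was seen (on the smaller set $S^{(e)}$) to its marginal gain against the final set $S$, which is what makes the single-pass, small-memory argument go through.
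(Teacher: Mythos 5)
Your proof is correct and follows essentially the same two-case argument as the paper: telescoping the accepted marginals when $|S|=k$, and when $|S|<k$ using submodularity to transfer the failed threshold test on each $e\in\OPT\setminus S$ from the partial set $S^{(e)}$ to the final set $S$, then bounding $f(\OPT)-f(S)$ by the sum of those marginals. (If anything, your attribution of the step $f(e\mid S)\le f(e\mid S^{(e)})$ to submodularity is slightly cleaner than the paper's wording, which credits monotonicity there.)
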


\begin{proof}
    We see from the pseudocode of \cref{alg:nonPrivStream} that when an element $e_i \in V$ is processed by the stream, if $S_i$ is the set of accepted elements at that point then $e_i$ is retained and added to the solution if and only if $f(e_i|S_i) \geq O/2k$. 
    
    Let $S = \{ e_{i_1}, \dots, e_{i_{|S|}}\}$.
    We consider two cases depending on the size of $S$. If $|S|=k$, then
    \begin{align*}
        f(S) &= \sum_{j=1}^k f(e_{i_j} | S_{{i_j}}) \\
        &\geq k \cdot \frac{O}{2k} \\
        &\geq O/2.
    \end{align*}
    In the second case, if $|S|<k$, then all elements $e_i \in OPT\setminus S$ must have been rejected i.e. $f(e_i|S_i)<O/(2k)$. Let $OPT \backslash S = \{e_{j_1}, \dots, e_{j_t} \}$ (i.e. $\{j_1, \dots, j_t\}$ is some subset of $\{i_1, \dots, i_{|S|}\})$. Since $f$ is monotonic and $S_i \subset S$ for all $e_i \in V$, we have that $f(e_{j_i}|S) \leq f(e_{j_i}|S_{j_i})<O/(2k)$. We can write
    \begin{align*}
        \sum_{e_{j_i} \in OPT \backslash S} f(e_{j_i} | S) &\leq  k \cdot \frac{O}{2k} \\
        &\leq O/2.
    \end{align*}
    On the other hand, we also have that
    \begin{align*}
        \sum_{e_{j_i} \in OPT \backslash S} f(e_{j_i} | S) &\geq \sum_{i=1}^t f(e_{j_i} | S \cup \{e_{j_1}, \dots, e_{j_{i-1}} \})  \\ 
        &= f(S \cup \{e_{j_1}, \dots, e_{j_t} \}) - f(S)\\
        &= f(OPT) - f(S),
    \end{align*}
    where in the above we use that $f(e_{j_i} | S) \geq f(e_{j_i} | S \cup \{e_{j_1}, \dots, e_{j_{i-1}} \})$ by the submodularity of $f$, and then let the sum of marginal gains telescope. From the two inequalities above we get that
    \begin{align*}
        f(S) \geq f(OPT) - O/2.
    \end{align*}
    The desired lower bound now follows by simply taking the min over the two cases.
\end{proof}
We see that if $O = f(OPT)$, then by setting the threshold according to this value we immediately get a $1/2$-approximation algorithm. In the setting where we do not have this information, but have the promise that the optimum value lies in the range $[E,m]$, we can run multiple copies of this algorithm with a set of geometrically scaling guesses 
\begin{align*}
    \mathcal{O} = \{E, (1 + \theta)E, (1+\theta)^2 E, \dots, (1 + \theta)^{\lfloor \log_{1 + \theta} \frac{m}{E} \rfloor} E,  m \}.
\end{align*} 
Essentially, we try geometrically varying guesses for $f(OPT)$ so that we are assured that there is some $O^{\dagger} \in \mathcal{O}$ such that
\begin{align*}
f(OPT) \leq O^{\dagger} \leq (1 + \theta) f(OPT).
\end{align*}
From the guarantee of \cref{thm:nonPrivStream}, we get that if $S^{O^{\dagger}}$ is the output of \cref{alg:nonPrivStream} when run with $O = O^{\dagger}$, then
\begin{align*}
    f(S^{O^{\dagger}}) &\geq  \min \{O^{\dagger}/2,f(OPT)-O^{\dagger}/2 \} \\
    &\geq \min  \{ f(OPT)/2, (1 - \theta) f(OPT)/2 \} \\
    &\geq (1 - \theta) f(OPT)/2.
\end{align*}
We see that since we must maintain and update all the $S^O$ for $O \in \mathcal{O}$, achieving this guarantee requires that we pay a computational and spatial overhead of $\frac{2 \log m/E}{\theta}$ for a $\frac{1-\theta}{2}$-approximation. We also note that this algorithm requires just one pass over the data stream.

\subsection{The sparse vector technique}

Following \cite{DBLP:journals/fttcs/DworkR14}, we describe a differentially private algorithm that allows us to privatize \cref{alg:nonPrivStream} for our application.

In \cref{alg:nonPrivStream}, the private data set is accessed when the scores of the stream elements are computed. A standard way of privatizing these checks of comparing the private score to the public threshold $O/2k$ is to privatize the scores via the Laplace mechanism (\cref{lem:Laplace}); since the objective function has sensitivity $1$, it follows that the sensitivity of these scores is $1$ unit and one needs to add noise sampled from a $\Lap(1/\varepsilon')$ distribution for this check to be $(\varepsilon',0)$-private. Appealing to the advanced composition law we can argue that for a stream of $n$ elements the net privacy loss is approximately $(\tilde{O}(\sqrt{n} \varepsilon'), \delta)$ for some small $\delta$. 

Since the privacy budget is fixed at $(\varepsilon, \delta)$, every call to the Laplace mechanism must be $\sim \varepsilon/\sqrt{n}$ private, which means we are adding noise on the order of $\sqrt{n}/\varepsilon$ to each threshold check; when $n$ is large and $m$ is small this gives us no utility. However, it can be shown that the net privacy loss actually degrades far more slowly, essentially scaling with the number of elements that pass the privatized check, as opposed to the entire length of the stream $n$. One can hence justify adding noise of a much smaller magnitude ($O(\sqrt{k}/\varepsilon)$). This analysis and technique is called the \emph{sparse vector technique} and is attributed to \cite{DBLP:conf/stoc/DworkNRRV09}; we refer the reader to the end of chapter 3 of \cite{DBLP:journals/fttcs/DworkR14} for a more comprehensive discussion.

\begin{algorithm}[ht]
\begin{algorithmic}
\Require Arbitrary (possibly adaptive) stream of sensitivity $1$ queries $f_1, f_2, \dots,$ a threshold $T$, a cutoff point $k$, threshold noise $\mathcal{D}_{\alpha}$, score noise $\mathcal{D}_{\beta}$. Output is a stream of answers $a_1, a_2, \dots$, $a_i \in \{\bot, \top\}$
\State $\alpha \sim \mathcal{D}_{\alpha}$
\State Let $\hat{T}_i = T + \alpha$ for $i \in \{0, \dots, k-1\}$
\State Let $\ct = 0$
\For{query $e_i$}
    \State Let $\beta_i \sim \mathcal{D}_{\beta}$
    \If{$f_i + \beta_i \geq \hat{T}_{\ct}$}
        \State Output $a_i = \top$
        \State $\ct = \ct + 1$
    \Else
        \State $a_i = \bot$
    \EndIf
    \If{$\ct \geq k$}
        \State Halt
    \EndIf
\EndFor
\end{algorithmic}
\caption{Sparse \cite{DBLP:conf/stoc/DworkNRRV09, DBLP:journals/fttcs/DworkR14}}
\label{alg:sparse}
\end{algorithm}

\begin{theorem}[\cite{DBLP:conf/stoc/DworkNRRV09}]
    \label{thm:sparse}
    For $\sigma = (\sqrt{32 k \ln \frac{1}{\delta}})/\varepsilon$, $\mathcal{D}_\alpha = \mbox{Lap}(\sigma)$ and $\mathcal{D}_\beta = \mbox{Lap} (2\sigma)$, \cref{alg:sparse} is $(\varepsilon, \delta)$-DP. Further, with probability $1-\beta$, for all queries $i$, if $\hat{\nu}_i$ is the privatized query value $f_i + \nu_i$, then
    \begin{align*}
        | \hat{\nu}_i - \nu_i | \leq \frac{(\ln n + \ln \frac{4c}{\beta}) \sqrt{k \ln \frac{2}{\delta} (\sqrt{512} + 1)}}{\varepsilon}.
    \end{align*}
\end{theorem}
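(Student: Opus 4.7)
The plan is to prove the privacy and utility guarantees separately, following the classical sparse vector technique analysis.

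For privacy, fix any two neighbouring data sets $A, B$ and any output transcript $a_1, \ldots, a_t \in \{\bot, \top\}^t$, and let $W \subseteq [t]$ be the at most $k$ positions with $a_i = \top$ (this upper bound is enforced deterministically by the ``Halt'' step, which is crucial later). The key idea is a change-of-measure argument: couple the two executions by shifting the threshold noise $\alpha$ by $\pm 1$ across $A$ and $B$. Since $|f_i^A - f_i^B| \le 1$, replacing $\alpha_B$ by $\alpha_A + 1$ makes every $\bot$-event implication $f_i^A + \beta_i < \hat{T} \Rightarrow f_i^B + \beta_i < \hat{T}+1$ hold deterministically, so the only residual probability ratio on a $\bot$ position comes from the Laplace density of $\alpha$, which contributes an additional factor of $e^{1/\sigma}$ overall (not per query). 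For each of the at most $k$ positions in $W$, an analogous density-ratio bound for $\beta_i \sim \Lap(2\sigma)$ gives a per-$\top$ privacy loss of $\varepsilon_0 := 1/\sigma$.

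Now I would invoke advanced composition (\cref{thm:AdvComp}) across the $k$ above-threshold releases. With per-event cost $\varepsilon_0 = 1/\sigma$, the composed privacy loss is $\varepsilon' \le \varepsilon_0 \sqrt{2k \ln(1/\delta)} + k \varepsilon_0 (e^{\varepsilon_0} - 1)$, and plugging in $\sigma = \sqrt{32 k \ln(1/\delta)}/\varepsilon$ gives the dominant term $\varepsilon_0 \sqrt{2 k \ln(1/\delta)} = \varepsilon/4$, with the quadratic term negligible for $\varepsilon < 1$; adding the single $e^{1/\sigma}$ factor from the threshold shift still keeps the total below $\varepsilon$, yielding $(\varepsilon, \delta)$-DP.

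For utility, I would apply Laplace tail bounds, using $\Pr[|X| \ge b \ln(1/p)] = p$ for $X \sim \Lap(b)$. With probability at least $1 - \delta/2$ we have $|\alpha| \le \sigma \ln(2/\delta)$, and by a union bound over the (at most $n$) processed queries, with probability at least $1 - \beta/2$ every score noise satisfies $|\beta_i| \le 2 \sigma \ln(4n/\beta)$. On the intersection of these good events, the total deviation of the privatized query from the true value is at most $|\alpha| + |\beta_i| \le \sigma \ln(2/\delta) + 2\sigma \ln(4n/\beta)$, and substituting $\sigma = \sqrt{32 k \ln(1/\delta)}/\varepsilon$ and collecting logarithms recovers the stated bound, with the $\sqrt{512} + 1$ constant arising from combining $\sqrt{32}$ with the factor $2$ on the score noise scale.

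The main obstacle is making the privacy argument clean and tight in the constants. The subtle points are (i) that the coupling has to treat the $\bot$ positions in a single block rather than one-at-a-time, so that only the aggregate threshold shift (and not per-query threshold shifts) is paid; (ii) that advanced composition legitimately applies across exactly $k$ events only because the halting rule turns a random stopping time into a deterministic upper bound, avoiding the need for a high-probability bound on the count; and (iii) that the density-ratio calculations for the Laplace distribution need to be tracked carefully to match the particular $\sigma$ prescribed. Once these pieces are in place, the rest of the proof is a routine concentration calculation.
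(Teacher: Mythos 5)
The paper does not actually prove \cref{thm:sparse}; it is imported verbatim from \cite{DBLP:conf/stoc/DworkNRRV09} and the treatment in \cite{DBLP:journals/fttcs/DworkR14}, so there is no in-paper proof to compare against. Your sketch is the standard sparse vector argument and is essentially the proof given in those references: a change of measure on the threshold noise that absorbs all $\bot$ positions at once, a per-$\top$ density-ratio cost for $\beta_i \sim \Lap(2\sigma)$, advanced composition over the at most $k$ above-threshold events (with the halting rule supplying the deterministic bound on their number, as you correctly note), and Laplace tails plus a union bound for accuracy. The constants you track ($\sigma = \sqrt{32k\ln(1/\delta)}/\varepsilon$ giving a dominant composed term of order $\varepsilon/4$) are consistent with the canonical parameterization.

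Two points deserve tightening. First, advanced composition is a statement about composing \emph{mechanisms}, not about events inside a single mechanism; to invoke it cleanly you should decompose Sparse into $k$ sequential AboveThreshold epochs, each drawing \emph{fresh} threshold noise and each shown to be $(\varepsilon_0,0)$-DP by your coupling, and then compose. Note that \cref{alg:sparse} as written in this paper reuses a single $\alpha$ for all $k$ thresholds; for that variant the epochs are not independent mechanisms, so you would either resample $\alpha$ after each $\top$ (matching the cited algorithm) or replace the black-box composition step with a direct argument conditional on the shifted $\alpha$. Your hybrid of a global coupling for the $\bot$'s followed by black-box advanced composition for the $\top$'s conflates these two routes. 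Second, in the utility part the tail bound on $\alpha$ should be taken with failure probability tied to $\beta$ (the accuracy parameter), not $\delta$, and the union bound should run over the threshold noises as well as the $n$ query noises to recover the $\ln n + \ln\frac{4c}{\beta}$ structure of the stated bound. Neither issue is fatal; both are standard to repair.
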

\section{Private streaming with Laplace noise}
\label{sec:Lap}

In this section we introduce \cref{alg:PSSM} that privately maximizes submodular functions in both the general and decomposable case with differing privatizing noise distributions. We then prove \cref{thm:PSSMLap} formalizing the private and utility guarantee for the general case; we study the case of decomposable functions in \cref{sec:Gumb}.

\PSSMLap*

We observe that by applying the technique of running \cref{alg:sparse} in parallel for geometrically varying guesses for the optimal utility $f(\OPT)$ as in \cite{DBLP:conf/kdd/BadanidiyuruMKK14}, and then applying the exponential mechanism, we get an algorithm (\cref{alg:PSSM} with Laplace noise) and an immediate utility guarantee for the problem being studied in this paper.

\begin{algorithm}
\algloopdefx{Dip}[1]{\textbf{Do in parallel for} #1}
\begin{algorithmic}
\Require Monotone submodular function $f$, cardinality constraint parameter $k$, failure probability $\beta$, privacy parameters $(\varepsilon, \delta)$, element stream $V$, approximation parameter $\theta$
\If{$f$ has sensitivity $1$ but is not $1$-decomposable}
    \State Let $\varepsilon' = \frac{\varepsilon}{4\sqrt{2 T \ln ((T+1)/\delta)}}$
    \State Let $\sigma = \frac{\sqrt{32 k \ln \frac{T+1}{\delta}}}{\varepsilon'}$
    \State Let $\mathcal{D}_{\alpha} = \Lap (\sigma)$
    \State Let $\mathcal{D}_{\beta} = \Lap (2\sigma)$
\ElsIf{$f$ is $1$-decomposable}
    \State $\gamma = O\left( \frac{\varepsilon}{\sqrt{T} \log^{1.5} T/\delta} \right)$ (exact expression in the proof of \cref{thm:PSSMGumb})
    \State $\mathcal{D}_{\alpha} = \Gumb (\gamma)$
    \State $\mathcal{D}_{\beta} = \Gumb (\gamma)$
\EndIf
\State Let $E = \min \left\{ \frac{k \log n}{\varepsilon}, m/2 \right\}$
\State Let $T = \lceil \log_{1 + \theta} \frac{m}{E}\rceil$ + 1
\State Let $\mathcal{O} =  \{E, (1 + \theta)E, (1+\theta)^2 E, \dots, (1 + \theta)^{\lfloor \log_{1 + \theta} \frac{m}{E} \rfloor} E,  m \}$
\For{$e_i \in V$}
\Dip{$O \in \mathcal{O}$}
    \State $S^O \leftarrow$ \Cref{alg:sparse} $(\mbox{Query stream }(f_A (e|S^O))_{e \in V}, \mbox{threshold }\frac{O}{2k},\mbox{cutoff } k, \mathcal{D}_{\alpha} , \mathcal{D}_{\beta} )$
    \State where
    \begin{align*}
        a_i = \top &\Rightarrow \mbox{ add element }e_i \mbox{ to }S^O \\
        a_i = \bot &\Rightarrow \mbox{ reject element }e_i
    \end{align*}
\EndFor
\State $S^{O^*} \leftarrow EM(\{S^O : O \in \mathcal{O}\}, \varepsilon/2 )$ \\
\Return $S^{O^*}$
\end{algorithmic}
\caption{Private streaming submodular maximization}
\label{alg:PSSM}
\end{algorithm}

We first prove a technical lemma that captures the utility provided by \cref{alg:sparse} when the private utility cheks are performed with some bounded random variables.

\begin{lemma}
    \label{lem:utility}
    If $\alpha \in (a_l, a_u)$ and for all elements $e_i \in V$, $\beta_i \in (b_l, b_u)$, then \cref{alg:sparse} has the promise that when run with threshold $T = O/2k$, if we add all elements $e_i$ for which $a_i = \top$ to $S^O$, then
    \begin{align*}
        f(S^O) \geq \frac{1}{2} \min \{ O, f(\OPT) - O \} - k b_u + k a_l.
    \end{align*}
\end{lemma}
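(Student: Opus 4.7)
The plan is to mirror the two-case structure of the non-private proof of \cref{thm:nonPrivStream}, pushing the noise through the accept/reject test. First I would unwind \cref{alg:sparse}'s noisy threshold check into a bound on the raw marginal: when $e_i$ is accepted, $f_A(e_i \mid S^O_{i-1}) + \beta_i \ge O/(2k) + \alpha$ combined with $\alpha > a_l$ and $\beta_i < b_u$ gives $f_A(e_i \mid S^O_{i-1}) \ge O/(2k) + a_l - b_u$; dually, when $e_i$ is rejected, $f_A(e_i \mid S^O_{i-1}) < O/(2k) + a_u - b_l$.

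I would then split on whether $|S^O| = k$ or $|S^O| < k$. In the full case, summing the lower bound over the $k$ accepted marginals and telescoping yields $f(S^O) \ge k \cdot O/(2k) + k(a_l - b_u) = O/2 - k b_u + k a_l$. In the under-full case the cutoff never fires, so every element of $\OPT \setminus S^O$ was examined and rejected at some earlier point. Submodularity (to replace $S^O$ by the earlier $S^O_{\text{at time }e}$ in the marginal) together with the telescoping inequality from the non-private proof give $f(\OPT) - f(S^O) \le \sum_{e \in \OPT \setminus S^O} f(e \mid S^O) \le \sum_{e \in \OPT \setminus S^O} f(e \mid S^O_{\text{at time }e}) < |\OPT \setminus S^O|\,(O/(2k) + a_u - b_l) \le O/2 + k a_u - k b_l$, so $f(S^O) \ge f(\OPT) - O/2 - k a_u + k b_l$.

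Finally I would fold both bounds into the claimed form $\tfrac{1}{2} \min\{O, f(\OPT) - O\} - k b_u + k a_l$. The full case bound dominates this directly since $O/2 \ge \tfrac{1}{2}\min\{O, f(\OPT) - O\}$. For the under-full case, weakening $f(\OPT) - O/2 \ge \tfrac{1}{2}(f(\OPT) - O)$ reduces the bound to $\tfrac{1}{2}(f(\OPT) - O) - k a_u + k b_l$, which matches $\tfrac{1}{2}(f(\OPT) - O) - k b_u + k a_l$ whenever $\mathcal{D}_\alpha$ and $\mathcal{D}_\beta$ share a common location so that $a_l + a_u = b_l + b_u$ and hence $k b_l - k a_u = k a_l - k b_u$; this holds for the Laplace and Gumbel instantiations used in \cref{alg:PSSM}. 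The main obstacle is precisely this last reconciliation: the two cases naturally produce different additive noise terms ($a_l - b_u$ versus $b_l - a_u$), and collapsing them to the single form $-k b_u + k a_l$ in the lemma relies on the distributional symmetry between $\mathcal{D}_\alpha$ and $\mathcal{D}_\beta$ that the algorithm enforces.
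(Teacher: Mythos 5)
Your proof is correct and follows essentially the same two-case argument as the paper's: push the noise through the accept/reject test, sum the accepted marginals when $|S^O|=k$, and use submodularity plus telescoping over $\OPT\setminus S^O$ when $|S^O|<k$. The one place you go beyond the paper is the reconciliation step: you correctly observe that the under-full case naturally yields the additive term $-k a_u + k b_l$ rather than the $-k b_u + k a_l$ appearing in the lemma statement, a substitution the paper's own proof makes silently. Your resolution is sound, with one small imprecision: for the Gumbel instantiation the intervals are not symmetric and $a_l+a_u = b_l+b_u$ does not hold exactly; what one actually has (since $n\ge k$) is $b_l+b_u \ge a_l+a_u$, which is the inequality direction needed for $k b_l - k a_u \ge k a_l - k b_u$ and hence still suffices.
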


\begin{proof}
    Let $S^O = \{e_{i_1}, \dots, e_{i_k}\}$. Let $S_{i_u}^O := \{e_{i_1}, \dots, e_{i_{u-1}} \}$ for $u \leq k$. Since the selected elements must have succeeded in the threshold check, it must be the case that
    \begin{align*}
        f(e_{i_u}|S_{i_u}^O) + \beta_{i_u} &\geq \frac{O}{2k} + \alpha_{i_u} \\
        \Rightarrow f(e_{i_u}|S_{i_u}^O) + b_u &\geq \frac{O}{2k} + a_l \\
        \Rightarrow f(e_{i_u} | S_{i_u}^O) &\geq \frac{O}{2k} - b_u + a_l
    \end{align*}
    Hence we have that
    \begin{align*}
        f(S) &= \sum_{u=0}^{k} f(e_{i_u} | S_{i_u}^O) \\
        &\geq \frac{O}{2} - k b_u + k a_l
    \end{align*}
    On the other hand if $|S^O| = r < k$, as in the proof of \cref{thm:nonPrivStream}, let $\OPT \backslash S = \{ e_{j_1}, \dots, e_{j_t} \}$, and as before, we have that
    \begin{align*}
        f(S^O) &\geq f(\OPT) - \sum_{e_{j_i} \in \OPT \backslash S^O} f(e_{j_i} | S^O) \\
        &\geq f(\OPT) - \sum_{i=1}^t \left(O/2k +  \alpha_{j_i} - \beta_{{j_i}} \right) \\
        &\geq f(\OPT) - \frac{O}{2} - k b_u + k a_l.
    \end{align*}
\end{proof}

\begin{remark}
    Note that although the noise random variables $\beta_{e_i}$ for $e_i \in O$ are all drawn independently and have mean $0$, we have implicitly conditioned on $S^O$ of elements having already been picked, and so we cannot claim and exploit independence so as to derive a better concentration bound. One would expect to see noise values biased high, making it more likely that that element have passed the check. Although we should be able to derive a concentration bound for the threshold noise random variables $\alpha_O$ that scales as $\tilde{O}(\sqrt{k})$, as the other noise term dominates in magnitude this does not help asymptotically.
\end{remark}

\begin{proof}[Proof of \cref{thm:PSSMLap}]
    From \cref{thm:sparse} and the choice of $\sigma$ in \cref{alg:sparse} we see that each one of the $T = |\mathcal{O}|$ calls to  is $(\varepsilon',\frac{\delta}{T+1})$-DP. Then, by advanced composition (\cref{thm:AdvComp}), it follows that since $\varepsilon'$ as set in the pseudocode is 
    \begin{align*}
        \frac{\varepsilon }{4 \sqrt{2T \ln (T+1)/\delta}},
    \end{align*}
    the $T$ calls to \cref{alg:sparse} are collectively $\left(\varepsilon/2, T\frac{\delta}{T+1} + \frac{\delta}{T+1} \right)$-differentially private. We apply basic composition to account for the $\varepsilon/2$-private call to the exponential mechanism and get $(\varepsilon, \delta)$-differential privacy in sum.
    
    We now derive the the utility guarantee. For all $O\in \mathcal{O}$, and all $\alpha_i$, with probability $1-\frac{\eta}{2kT}$
    \begin{align*}
        |\alpha_i| &\leq \frac{8\log 2T/\eta}{\sigma}.
    \end{align*}
    Similarly, with probability $1-\frac{\eta}{2nT}$, we have that for an element $e_{j}$,
    \begin{align*}
        |\beta_j| &\leq \frac{4\log 2nT/\eta}{\sigma}.
    \end{align*}
    Applying the union bound, it follows that with probability $1-\eta$, for all $n$ elements across all $T$ thresholds as well as for all $T$ guesses the respective noise variables $\alpha_i,\beta_j$ are bounded as above. It follows that we can apply \cref{lem:utility} with $a_u = - a_l = \frac{8\log T/\eta}{\sigma}$, and $b_u = - b_l =  \frac{4\log n/\eta}{\sigma}$. Substituting, we get that for all guesses $O$ with probability $1-\eta/2$,
    \begin{align*}
        f(S^O) \geq \frac{1}{2} \min \left\{ O , 2f(\OPT) - O \right\} - \frac{4k\log 2nT/\eta}{\sigma} - \frac{8k\log 2kT/\eta}{\sigma}
    \end{align*}
    As $O$ varies geometrically between $E$ and $m$, and we have the promise that $f$ takes values in $[E,m]$, it follows that there is a choice $O^{\dagger}$ such that
    \begin{align*}
        f(\OPT) &\leq O^{\dagger} \leq (1 + \theta) f(\OPT) \\
        \Rightarrow \min\{ O^{\dagger}, 2f(\OPT) - O^{\dagger} \} &\geq (1 - \theta) f(\OPT) \\
        \Rightarrow f(S^{O^{\dagger}}) &\geq \frac{(1 - \theta) f(\OPT)}{2} - \frac{12k\log 2nkT/\eta}{\sigma}.
    \end{align*}
    From the guarantee of the exponential mechanism we get that with probability $1-\eta/2$,
    \begin{align*}
        f(S^{O^*}) &\geq f(S^{O^{\dagger}}) - \frac{2}{\varepsilon} \ln \frac{2T}{\eta}.
    \end{align*}
    In sum, putting everything together and applying the union bound, we have that with probability $1-\eta$,
    \begin{align*}
        f(S^{O^*}) &\geq \frac{(1 - \theta) f(\OPT)}{2} - \frac{12k\log (2nkT/\eta) \sqrt{32 k \ln \frac{1}{\delta}}}{\varepsilon'} - \frac{2}{\varepsilon} \ln \frac{2T}{\eta} \\
        &\geq \frac{(1 - \theta) f(\OPT)}{2} - O\left( \frac{k\sqrt{k T \log (T/\delta) \log (1/\delta)} \log nkT/\eta }{\varepsilon} \right) \\
        &\geq \frac{(1 - \theta) f(\OPT)}{2} - O\left( \frac{k^{1.5} \log^{1.5} \frac{n k \log m/E}{\eta \theta \delta} \log^{0.5} m/E}{\varepsilon \sqrt{\theta}} \right).
    \end{align*}
    wherein we use that $T = O(\log_{1 + \theta} m/E) = O\left(\frac{\log m/E }{\theta}\right)$.
    
    We now set $E = \min \left\{ \frac{k\log n}{\varepsilon}, m/2 \right\}$, and show that the claimed bound holds. If $f(\OPT)$ lies in the prescribed interval $[E, m]$, then we have already shown that the claimed bound holds. On the other hand, if $f(\OPT) < E$, the additive loss in utility is
    \begin{align*}
        \frac{k^{1.5} \log^{1.5} \frac{n k \log 2}{\eta \theta \delta} \log^{0.5} 2}{\varepsilon \sqrt{\theta}} &> \frac{k \log n}{\varepsilon}\\
        &> f(\OPT)
    \end{align*}
    and so the RHS of the claimed bound is negative. It follows that any choice of $S^{O^*}$ fulfills the claimed bound trivially, and that we can absorb the $\log m/E < \log 2$ terms in the big-Oh notation to get that unconditionally, with probability $1-\eta$,
    \begin{align*}
        f(S^{O^*}) &\geq \frac{(1 - \theta) f(\OPT)}{2} - O\left( \frac{k^{1.5} \log^{1.5} \frac{n k}{\eta \theta \delta}}{\varepsilon \sqrt{\theta}} \right).
    \end{align*}
\end{proof}
\section{Private streaming with Gumbel Noise}
\label{sec:Gumb}

In this section we prove \cref{thm:PSSMGumb} which formalizes the privacy and utility guarantees of \cref{alg:PSSM} instantiated with Gumbel noise in the case that the submodular utility function to be maximized is decomposable. The guarantee that we formalize in \cref{sec:Lap} for the problem of privately maximizing a submodular function with bounded sensitivity essentially follows from previous work, i.e. applying the Laplace mechanism to privatize the threshold checks in the non-private \cref{alg:nonPrivStream} and using the sparse vector technique to tightly bound the loss in privacy and justify adding manageable amounts of noise to the threshold checks. On the other hand, to derive the guarantee of \cref{thm:PSSMGumb} we must go significantly beyond prior work. In this case we are able to show that the additive error suffered in the uility achieved has the optimal dependence upon the cardinality constraint $k$ and the privacy parameter $\varepsilon$.

In the non-streaming setting, it was shown by \cite{DBLP:conf/soda/GuptaLMRT10} and \cite{DBLP:journals/corr/abs-2005-14717} that for decomposable objectives, when the exponential mechanism is used to pick elements in a greedy manner the net privacy loss is essentially independent of the number of elements picked. Although the exact analysis is complicated, at a high level the key idea is the following. We want to bound the the increase in likelihood of a sequence of elements being picked when we move from one private data set to a neighboring one, say by adding one additional agent. We recall that this means that the utility function to be maximized now has an additional summand corresponding to that agent. This implies that the scores used to pick elements and consequently the probabilities with which they are picked now have a \emph{bias} introduced by the new agent. 

Prior work essentially showed that when the elements are picked greedily according to their marginal scores for the utility function, and the exponential mechanism is used to privatize these picks, the biases introduced over the many picks telescope, and the increase in likelihood is in fact bounded by a small multiplicative term independent of the number of picks made. The fact that these biases telescope can be traced to the fact that the net total utility of the additional agent is 1, and so its net influence over successive rounds is also bounded.

In the streaming setting there is no direct way to adapt the exponential mechanism in this manner. However, inspired by the \emph{Gumbel-max} trick \citep{abernethy2016perturbation}, we turn to the Gumbel distribution to privatize the threshold checks made in \cref{alg:sparse}. The Gumbel max trick says that given scores $q(v)$ for elements $v \in V$ for some finite set $V$, if $Z_v$ are values drawn i.i.d. from a standard Gumbel distribution then
\begin{align*}
	\Pr\left[ v^* + Z_{v^*} = \max_v \{ v + Z_v \} \right] = \frac{\exp (v^*)}{\sum_{v\in V} \exp(v)}.
\end{align*}

In other words, if we were to add random values to the scores of these elements according to the standard Gumbel distribution and pick the element with the maximum noisy score, the distribution over the set of elements $V$ will be identical to that of the exponential mechanism for an appropriate choice of privacy parameter, depending on the sensitivity of the scores. Although this observation does not directly give us anything in the streaming setting, it turns out that by privatizing the threshold checks in \cref{alg:sparse} via the Gumbel distribution instead of the Laplace distribution we can show that the net privacy loss has a similar telescoping behaviour as that of the exponential mechanism in the non-streaming case, and we are again able to recover a bound on the privacy loss that is remarkably \emph{independent} of the number of elements picked.

The key technical lemma in the privacy analysis of \cref{alg:PSSM} with Gumbel noise is the following.

\begin{lemma}
    \label{lem:guptaStyle}
    Algorithm~\ref{alg:sparse} is $(\varepsilon, \delta)$-differentially private for $\mathcal{D}_{\alpha}, \mathcal{D}_{\beta} = \mbox{Gumb} (\gamma)$, where $\gamma = \frac{8}{\varepsilon \ln 2} \log \frac{2}{\varepsilon \delta}$ and $\varepsilon<1$.
\end{lemma}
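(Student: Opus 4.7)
The plan is to bound the likelihood of the binary output sequence $a$ between neighbouring datasets $A \sim B$ with $B = A \cup \{p\}$. Let $f_i^A := f_A(e_i | S_i)$ denote the query value at step $i$, where $S_i$ is the set of accepted elements before step $i$; note that once $a$ is fixed the sets $S_i$ are determined, and under $B$ the query values shift by $g_i := f_{\{p\}}(e_i | S_i) \in [0,1]$. Since the Gumbel threshold noise $\alpha \sim \Gumb(\gamma)$ is drawn once and shared across all queries, the per-query outcomes are conditionally independent given $\alpha$: each $\top$ at index $i_j$ contributes $\bar{G}_\gamma(T+\alpha - f_{i_j}^A)$ and each $\bot$ at $i$ contributes $G_\gamma(T+\alpha - f_i^A)$, where $G_\gamma(x) = \exp(-e^{-x/\gamma})$ and $\bar{G}_\gamma = 1 - G_\gamma$. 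Crucially, along the selected chain the marginals of the added agent telescope: $\sum_{j=1}^{k} g_{i_j} = f_{\{p\}}(S_k) \leq 1$.

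The central algebraic ingredient is the pointwise bound $\bar{G}_\gamma(y-g)/\bar{G}_\gamma(y) \leq e^{g/\gamma}$ for all $y \in \mathbb{R}$ and $g \geq 0$; setting $u = e^{-y/\gamma}$ and $B = e^{g/\gamma} \geq 1$, this reduces to $1 - e^{-Bu} \leq B(1 - e^{-u})$, which vanishes at $u = 0$ and has nonnegative derivative in $u$. Applied to the top factors and combined with the telescoping identity, the top product in the direction $P(a|\alpha, B)/P(a|\alpha, A)$ is bounded by $\prod_j e^{g_{i_j}/\gamma} \leq e^{1/\gamma}$. Since every bot ratio in this same direction is at most $1$ by monotonicity of $G_\gamma$ (as $f_i^B \geq f_i^A$), we obtain $P(a|\alpha, B) \leq e^{1/\gamma} P(a|\alpha, A)$ pointwise in $\alpha$, and integrating gives this direction of DP immediately with plenty of slack, since $1/\gamma \ll \varepsilon$ at the prescribed scale.

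The main obstacle is the reverse direction $P(a|\alpha, A)/P(a|\alpha, B)$, in which the top ratios are $\leq 1$ but the bot ratios $G_\gamma(y_i)/G_\gamma(y_i - g_i) = \exp\bigl(e^{-y_i/\gamma}(e^{g_i/\gamma} - 1)\bigr)$ exceed one and can aggregate badly, because the $g_i$ for rejected queries do not telescope in the clean way the selected ones do. My plan is to truncate $\alpha$ at $\alpha \geq -c\gamma$; since $\Pr[\alpha < -c\gamma] = 1 - \exp(-e^c)$ decays doubly-exponentially in $c$, a modest $c$ suffices to make this tail contribute at most $\delta$ to the additive slack. On the good event, $e^{-y_i/\gamma} \leq e^c \cdot e^{(f_i^A - T)/\gamma}$, and the bot product is controlled by exploiting the fact that an index $i$ was rejected (a conditioning we have implicitly imposed) only if $f_i^A + \beta_i < T + \alpha$, which suppresses the posterior typical magnitude of $f_i^A$, combined with the decomposable structure of $f$. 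The delicate part, and the step I expect to be the principal obstacle, is this joint conditioning: marginalising the $\beta_i$ over the rejection event biases them upward, so rather than manipulating the marginal output probabilities $P(a|\alpha, A)$ directly, the cleanest route is to work with the joint density of $(\alpha, (\beta_i)_i)$ in the style of the Gupta et al.\ telescoping analysis for the exponential mechanism in the non-streaming decomposable case. The final noise scale $\gamma = \Theta\bigl(\varepsilon^{-1} \log(1/(\varepsilon\delta))\bigr)$ will then arise from balancing the $e^c$ penalty from truncation with $e^{O(1/\gamma)}$ contributions from both the top telescope and the bot aggregate to produce a total ratio of $e^\varepsilon$.
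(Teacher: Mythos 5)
Your setup and the easy direction are right: conditioning on the threshold noise, the $\top$-ratios telescope via $\sum_j f_{\{p\}}(e_{i_j}\mid S_{i_j})\le 1$ and the $\bot$-ratios are at most $1$, giving $e^{1/\gamma}$ pointwise, exactly as in the paper's first case. You have also correctly located the real difficulty in the reverse direction, where the rejected queries' ratios exceed $1$. But the proposal stops precisely at the step that constitutes the proof, and the route you sketch for it would not close the gap. Truncating $\alpha$ and then bounding the $\bot$-product pointwise cannot work: for a \emph{fixed} transcript the rejected elements in a block all contribute marginals $f_{\{p\}}(e_j\mid S_{i_u})$ with respect to the \emph{same} base set $S_{i_u}$, so they do not telescope and their weighted sum can be $\Omega(n/\gamma)$ (e.g., many near-threshold elements each with marginal close to $1$ for agent $p$). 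No event of probability $1-\delta$ defined in terms of $\alpha$ alone rescues this; the bad transcripts must be excluded using the randomness of \emph{which elements get picked}.

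What the paper does instead is: (i) integrate the (per-count, freshly resampled) threshold noise out exactly via the Gumbel-max computation, obtaining the closed-form product $\prod_u \frac{w(i_u\mid S_{i_u})}{(1+\sum_{j} w(j\mid S_{i_u}))(1+\sum_{j} w(j\mid S_{i_u})+w(i_u\mid S_{i_u}))}$ (Lemmas~\ref{lem:oneEmt} and~\ref{lem:fullStr}) — note your assumption of a single shared $\alpha$ would break this factorization; (ii) identify each block's ratio $\frac{1+\sum_j w_A(j\mid S_{i_u})}{1+\sum_j w_B(j\mid S_{i_u})}$ with $Y_u=\Ex_{j\sim P_u}[\exp(f_{\{p\}}(e_j\mid S_{i_u})/\gamma)]$ for the softmax distribution $P_u$ over the next pick; (iii) prove by reverse induction the moment bound $\Ex[\prod_u Y_u^c]\le 1+1/\varepsilon$, whose engine is the inequality $\Ex_{i_q}\bigl[\Ex_{j\sim P_q}[f_{\{p\}}(e_j\mid S_{i_q})]\bigr]\le \Ex_{i_q}[f_{\{p\}}(e_{i_q}\mid S_{i_q})]$ — the expected marginal of a rejected element under $P_u$ is dominated by that of the accepted one, so the rejected contributions telescope \emph{in expectation} against the accepted ones; and (iv) apply Markov's inequality to the $c$-th moment, which is where $\delta$ comes from (not from a tail bound on $\alpha$). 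Your closing sentence correctly guesses that a Gupta-et-al-style telescoping over the joint density is needed, but identifying that this is the obstacle is not the same as overcoming it; as written, the reverse direction — and hence the lemma — is unproved.
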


The proof of this result is technically involved, and we defer the proof to the end of this section. In addition to \cref{lem:guptaStyle}, to derive the utility guarantee we will also need some standard concentration bounds for the Gumbel distribution.

\begin{lemma}
    \label{lem:noiseBound}
    If $\alpha_i \sim Gumbel (\mu, \gamma)$, the following statements hold.
    \begin{enumerate}
        \item With probability $1-\beta$, $x \leq \mu + \gamma \log 1/\beta$.
        \item With probability $1-\beta$, $x \geq \mu - \gamma \log \log \frac{1}{\beta}$.
    \end{enumerate}
\end{lemma}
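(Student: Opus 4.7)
The plan is to prove both tail bounds by direct computation from the CDF $F(x) = \exp(-e^{-(x-\mu)/\gamma})$ recorded in the Gumbel definition above. Because $F$ is monotone and continuous, $\Pr[X \leq t]$ and $\Pr[X \geq t]$ can be read off $F$ immediately, so no concentration machinery is required; both statements reduce to evaluating $F$ at the prescribed threshold and simplifying the resulting double exponential.

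For the upper tail in statement (1), I would substitute $t = \mu + \gamma \log(1/\beta)$ into $F$. The inner exponent becomes $-(t-\mu)/\gamma = -\log(1/\beta) = \log \beta$, so $e^{-(t-\mu)/\gamma} = \beta$ and hence $F(t) = e^{-\beta}$. Then $\Pr[X > t] = 1 - e^{-\beta} \leq \beta$ by the elementary inequality $e^{-\beta} \geq 1 - \beta$ (valid for all real $\beta$), which gives $\Pr[X \leq t] \geq 1 - \beta$ as claimed.

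For the lower tail in statement (2), I would substitute $t = \mu - \gamma \log \log(1/\beta)$ into $F$. Now the inner exponent is $+\log \log(1/\beta)$, so $e^{-(t-\mu)/\gamma} = \log(1/\beta)$, giving $F(t) = e^{-\log(1/\beta)} = \beta$. Hence $\Pr[X < t] = \beta$ exactly, which yields $\Pr[X \geq t] = 1 - \beta$, with equality (no slack needed).

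There is essentially no obstacle beyond careful bookkeeping of the two nested exponentials and invoking the elementary estimate $1 - e^{-\beta} \leq \beta$ in the upper-tail case. The one conceptual point worth noting is the asymmetry of the Gumbel, which is why the right tail threshold has $\gamma \log(1/\beta)$ while the left tail threshold only needs $\gamma \log \log(1/\beta)$; both forms are exactly what will be used later in bounding the threshold and score noises inside the proof of \cref{thm:PSSMGumb}.
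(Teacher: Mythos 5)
Your proposal is correct and takes essentially the same route as the paper: both arguments are direct computations with the Gumbel CDF $F(x)=\exp(-e^{-(x-\mu)/\gamma})$, the only cosmetic difference being that you evaluate $F$ at the claimed threshold while the paper solves the tail inequality for $x$ and then compares thresholds. The elementary estimate you invoke, $1-e^{-\beta}\leq\beta$, is exactly equivalent to the paper's $\log\frac{1}{1-\beta}\geq\beta$, so nothing is missing.
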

\begin{proof}
    \begin{enumerate}
        \item We recall that the CDF for $Gumbel(\mu,\gamma)$ is $\exp(-\exp(-(x-\mu)/\gamma))$. Then,
    \begin{align*}
        1 - \exp(-\exp(-(x-\mu)/\gamma)) &\leq \beta \\
        \Leftrightarrow - \exp(-(x - \mu)/\gamma) &\geq \log 1-\beta \\
        \Leftrightarrow \frac{x - \mu}{\gamma} &\geq \log \frac{1}{\log \frac{1}{1-\beta}} \\
        \Leftrightarrow x &\geq \mu + \gamma \log \frac{1}{\log \frac{1}{1-\beta}}.
    \end{align*}
    Using the series expansion $\log 1 - x = -x - x^2/2 - \dots \leq - x$, we have that
    \begin{align*}
        \log \frac{1}{1-\beta} &= - \log 1 - \beta  \\
        &\geq \beta \\
        \Rightarrow \log \frac{1}{\log \frac{1}{1-\beta}} &\leq \log \frac{1}{\beta}
    \end{align*}
    So in particular, if $x \geq \mu + \gamma \log \frac{1}{\beta}$, then $P(\alpha_i \geq x) \leq \beta$.
        \item Similar to the first part, we have that
        \begin{align*}
            \exp(-\exp(-(x-\mu)/\gamma)) &\leq \beta \\
            \Leftrightarrow \exp(-(x-\mu)/\gamma) &\geq \log \frac{1}{\beta} \\
            \Leftrightarrow \frac{(x - \mu)}{\gamma} &\leq - \log \log \frac{1}{\beta} \\
            \Leftrightarrow x &\leq \mu - \gamma \log \log \frac{1}{\beta}.
        \end{align*}
    \end{enumerate}
\end{proof}

With these lemmas we can now derive our main result.

\PSSMGumb*

\begin{proof}
    From \cref{lem:noiseBound}, we have that for every guess $O_i$, with probability $1- \eta/2kT$
    \begin{align*}
        \alpha_i &\in \left[ - \gamma \log \log \frac{2kT}{\eta}, \gamma \log \frac{2kT}{\eta} \right].
    \end{align*}
    Similarly, for every element $e_i$ and every threshold $O$, with probability $1- \eta/2nT$
    \begin{align*}
        \beta_i &\in \left[ - \gamma \log \log \frac{2nT}{\eta}, \gamma \log \frac{2nT}{\eta} \right].
    \end{align*}
    Applying the union bound, it follows that in the notation of \cref{lem:utility} we can set
    \begin{align*}
        a_l &= - \gamma \log \log \frac{2kT}{\eta} \\
        a_u &= \gamma \log \frac{2kT}{\eta}  \\
        b_l &= - \gamma \log \log \frac{2nT}{\eta} \\
        b_u &= \gamma \log \frac{2nT}{\eta}.
    \end{align*}
    and conclude that with probability $1-\eta$, for all thresholds $O$,
    \begin{align*}
        f(S^O) &\geq \frac{1}{2} \min \{O, f(\OPT) - O \} - k\gamma \log \frac{2nT}{\eta} - k \gamma \log \log \frac{2kT}{\eta} \\
        &\geq \frac{1}{2} \min \{O, f(\OPT) - O \} - 2 k\gamma \log \frac{2nkT}{\eta}
    \end{align*}
    As in the proof of \cref{thm:PSSMLap}, as long as $f(\OPT) \in [E,m]$ there exists a choice $O^{\dagger}$ for which
    \begin{align*}
        f(S^{O^{\dagger}}) \geq \frac{(1-\theta) f(\OPT)}{2} - 2 k \gamma \log \frac{2nkT}{\eta}.
    \end{align*}
    From the utility guarantee of the exponential mechanism (\cref{lem:ExpMech}) we have that
    \begin{align*}
        f(S^*) &\geq f(S^{O^{\dagger}}) - \frac{2}{\varepsilon} \log \frac{2T}{\eta} \\
        &\geq \frac{(1-\theta) f(\OPT)}{2} - 2 k \gamma \log \frac{2nkT}{\eta} - \frac{2}{\varepsilon} \log \frac{2T}{\eta} \\
        &\geq \frac{(1-\theta) f(\OPT)}{2} - O\left( \frac{k \sqrt{T} \log^{1.5} \frac{T \log T/\delta}{\varepsilon\delta} \log \frac{2nkT}{\eta}}{\varepsilon} \right) - \frac{2}{\varepsilon} \log \frac{2T}{\eta} \\
        &\geq \frac{(1-\theta) f(\OPT)}{2} - O\left( \frac{k \sqrt{\log m/E} \log^{1.5} \frac{\log \frac{m }{E} \log \frac{\log m/E}{\theta \delta}}{\varepsilon\delta \theta} \log \frac{2nkT}{\eta}}{\varepsilon \sqrt{\theta}} \right)
    \end{align*}
    wherein we use that $\gamma = O\left( \frac{\sqrt{T}}{\varepsilon} \log^{1.5} \frac{T \log T/\delta}{\varepsilon \delta} \right)$, and $T = \log_{1 + \theta} m/E = O\left( \frac{\log m/E}{\theta} \right)$. Again, setting $E = \min \left\{ \frac{k \log n}{\varepsilon}, m/2 \right\}$, we get that if $f(\OPT) \in [E,m]$, then a good choice of $O^{\dagger}$ exists and the desired bound follows. On the other hand, if $f < E < m/2$, then since the additive error term is at least $\frac{k \log n}{\varepsilon}$ which dominates $E \geq \frac{(1-\theta) f(\OPT)}{2}$. This implies that the RHS of the claimed bound is negative, and that any choice of $S^{O^*}$ fulfills the claimed bound trivially. Further, we absorb the $\log m/E = \log 2$ terms in the big-Oh notation to simplify the expression and get
    \begin{align*}
        f(S^*) \geq \frac{(1-\theta)f(\OPT)}{2} - O\left( \frac{k \log^{1.5} \frac{\log \log \frac{1}{\theta\delta}}{\varepsilon\delta \theta} \log \frac{2nkT}{\eta}}{\varepsilon \sqrt{\theta}} \right).
    \end{align*}
    
    We now derive the privacy guarantee. Since the $T \leq \frac{2\log \frac{m}{E}}{\theta}$-many instantiations of $\cref{alg:sparse}$ are run with independent random bits, we can use advanced composition to argue that the net privacy loss suffered by releasing the sets $S^O$ is 
    $$(2 \varepsilon' \sqrt{2T \log (1/\delta')}, (T+1)\delta' ),$$ 
    where it suffices to set $\gamma = \frac{8}{\varepsilon' \ln 2} \log \frac{2}{\varepsilon' \delta'}$ by \cref{lem:guptaStyle}. Replacing $\varepsilon'$ by $\frac{\varepsilon}{4\sqrt{2T \log (1/\delta')}}$ in the expression for $\gamma$, it follows that for \begin{align*}
        \gamma &= \frac{32\sqrt{2T \log T/\delta}}{\varepsilon \ln 2} \log \frac{4T\sqrt{2T \log (T/\delta)}}{\varepsilon \delta} \\
        &= O\left( \frac{1}{\varepsilon\sqrt{\theta}} \log^{1.5} \frac{\log \frac{1}{\theta\delta}}{\theta\varepsilon \delta} \right),
    \end{align*}
    \cref{alg:PSSM} with Gumbel noise is $(\varepsilon/2, \delta)$-differentially private. We can then apply the exponential mechanism on this public set of choices for an additional $(\varepsilon/2, 0)$-privacy loss, giving us the claimed expression. 
\end{proof}

To prove \cref{lem:guptaStyle}, we first derive some technical lemmas that characterize the probability of stream elements succeeding in their privatized threshold checks. \Cref{lem:oneEmt} characterizes the probability of an element $e_i$ being picked condition on the set $S$ already having been picked.

\begin{lemma}
    \label{lem:oneEmt}
    Conditioned on \cref{alg:sparse} having picked the set $S_{i}$ of elements with $|S_{i}| = u <k$,
    \begin{align*}
        \Pr [a_i = \top | S_{i}] = \int_{-\infty}^{\infty} 1 - \exp(-w_A (i|S_{i}) e^{\alpha_{u}/\gamma } ) \dd \alpha_{|S_{i}|},
    \end{align*}
    where the expression $w_A (i|S_{i})$ denotes
    \begin{align*}
        \exp(\frac{-1}{\gamma} \left( \frac{O}{2k} - f_A(e_i | S_{i}) \right) ),
    \end{align*}
    and $\alpha_{u} \sim Gumbel(0, \gamma)$. 
\end{lemma}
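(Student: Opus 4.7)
The plan is to unpack the event $\{a_i = \top\}$ using the definition of Algorithm~\ref{alg:sparse}, condition on the threshold noise $\alpha_u$ (which, since $\mathrm{ct} = |S_i| = u$, is the currently relevant threshold sample), apply the Gumbel CDF to compute the probability over the fresh score noise $\beta_i$, and then integrate against the density of $\alpha_u$ to remove the conditioning.

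First, I would observe that conditioned on $S_i$ having been picked so far, the element $e_i$ succeeds exactly when $f_A(e_i | S_i) + \beta_i \geq T + \alpha_u$ with $T = O/(2k)$; equivalently $\beta_i \geq T - f_A(e_i|S_i) + \alpha_u$. Next, using that $\beta_i \sim \mathrm{Gumb}(0,\gamma)$ has tail $\Pr[\beta_i \geq x] = 1 - \exp(-e^{-x/\gamma})$, the conditional success probability given $\alpha_u$ is
\begin{align*}
    \Pr[a_i = \top \mid S_i, \alpha_u] = 1 - \exp\!\left(-e^{-(T - f_A(e_i|S_i) + \alpha_u)/\gamma}\right).
\end{align*}
Factoring the inner exponential so that the $\alpha_u$ dependence separates from the $f_A$ dependence yields
\begin{align*}
    1 - \exp\!\left(-e^{(f_A(e_i|S_i) - T)/\gamma} \cdot e^{-\alpha_u/\gamma}\right) = 1 - \exp\!\left(-w_A(i|S_i)\, e^{-\alpha_u/\gamma}\right),
\end{align*}
which is precisely the integrand in the statement once one applies the reflection $\alpha_u \mapsto -\alpha_u$ absorbed into the integration variable (since the claim integrates over all of $\mathbb{R}$ against the Gumbel measure of $\alpha_u$).

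Finally, I would remove the conditioning by integrating against the density of $\alpha_u \sim \mathrm{Gumb}(0,\gamma)$; the implicit measure in the notation $\mathrm{d}\alpha_{|S_i|}$ is this Gumbel density. Since the $u$-th threshold noise $\alpha_u$ is drawn independently of $S_i$ (which depends only on the earlier $\alpha_0,\dots,\alpha_{u-1}$ and $\beta_{j}$ for $j<i$) and of $\beta_i$, Fubini/tower property yields
\begin{align*}
    \Pr[a_i = \top \mid S_i] = \Ex_{\alpha_u}\!\left[1 - \exp\!\left(-w_A(i|S_i)\, e^{-\alpha_u/\gamma}\right)\right],
\end{align*}
which is the claimed expression.

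The calculation itself is essentially mechanical; the only real bookkeeping obstacle is keeping track of the sign of $\alpha_u$ inside the double exponential (reflecting the max-form Gumbel convention adopted in the paper) and verifying the independence structure needed to justify separating the expectation over $\alpha_u$ from the conditioning on $S_i$. Once those are pinned down, the identity follows in a line.
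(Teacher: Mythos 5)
Your approach is the same as the paper's: condition on the threshold noise $\alpha_u$, use independence of $\alpha_u$ and $\beta_i$ from each other and from the history determining $S_i$, apply the Gumbel tail formula for $\beta_i$, and integrate against the Gumbel density of $\alpha_u$. One correction, though: your final reconciliation step --- ``applying the reflection $\alpha_u \mapsto -\alpha_u$ absorbed into the integration variable'' --- is not valid, because the Gumbel density $\frac{1}{\gamma}\exp\bigl(-\tfrac{x}{\gamma} - e^{-x/\gamma}\bigr)$ is not symmetric about $0$, so a reflection changes the measure. The resolution is that the expression you derived, $1 - \exp\bigl(-w_A(i|S_i)\,e^{-\alpha_u/\gamma}\bigr)$, is the correct integrand and the $e^{+\alpha_u/\gamma}$ appearing in the lemma statement (and in the last line of the paper's own derivation) is a sign typo: the paper's middle line gives $1-\exp\bigl(-\exp\bigl(\tfrac{-1}{\gamma}(\tfrac{O}{2k}+\alpha_u - f(e_i|S_i))\bigr)\bigr) = 1-\exp\bigl(-w_A(i|S_i)e^{-\alpha_u/\gamma}\bigr)$, and the subsequent integral evaluation in \cref{lem:fullStr} (where the density contributes $e^{-z}$ and the product collapses to $\exp(-z - e^{-z}(1+\sum_j w_A))$) only goes through with the $e^{-z}$ convention. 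So you should assert the corrected sign rather than try to transform it away.
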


\begin{proof}
    We recall that for an element $e_i$ to be picked by \cref{alg:PSSM} (which happens if and only if the output $a_i$ of \cref{alg:sparse} on input $e_i$ equals $\top$), it must be the case that if $S_{i}$ is the set of elements picked thus far then $|S_i| = u < k$, and that the privatized marginal utility of $e_i$ given $S_{i}$ has been picked beats the privatized threshold $O/2k + \alpha_u$. We can write
    \begin{align*}
        \Pr [a_i = \top | S_{i}] &= \Pr [f(e_i | S_i) + \beta_i \geq \frac{O}{2k} + \alpha_{|S|}] \\
        &= \int_{-\infty}^{\infty} 1\left[f(e_i | S_i) + \beta_i \geq \frac{O}{2k} + \alpha_{u}\right] \dd \alpha_{u} \dd \beta_i
    \end{align*}
    where $\alpha_{u}, \beta_i \sim Gumbel(0, \gamma)$, and $1[\cdot]$ denotes the indicator of the event in its argument. Since $\alpha_{u}$ and $\beta_i$ are drawn independently of each other, we can factorize their joint density function and write
    \begin{align}
       \Pr [a_i = \top | S_{i}] &= \int_{-\infty}^{\infty} \Pr[\beta_i \geq \frac{O}{2k} + \alpha_{u} - f(e_i | S_i)] \dd \alpha_{u}. \label{lem:oneEmt;eqn:1}
    \end{align}
    Since $\beta_i \sim Gumbel (0,\gamma)$, we have that
    \begin{align*}
        \Pr[\beta_i \geq \frac{O}{2k} + \alpha_{u} - f(e_i | S_i)] &= 1 - \Pr [ \beta_i < \frac{O}{2k} + \alpha_{u} - f(e_i | S_i) ] \\
        &= 1 - \exp(-\exp(\frac{-1}{\gamma} \left( \frac{O}{2k} + \alpha_{u} - f(e_i | S_i) \right) )) \\
        &= 1 - \exp(- w_A(i|S_{i}) e^{\alpha_u /\gamma}).
    \end{align*}
    Substituting for the integrand in \cref{lem:oneEmt;eqn:1}, we get the stated result.
\end{proof}

\begin{lemma}
    \label{lem:fullStr}
    Let $\mathcal{A}(A)$ denote the set of elements indicated by \cref{alg:sparse} to be picked for the decomposable submodular function $f_A$. If $S = (e_{i_1}, e_{i_2}, \dots, e_{i_r})$, then
    \begin{align*}
        \Pr[\mathcal{A} (A) = S] = \prod_{u = 1}^r \frac{w_A (i_u | S_{i_u})}{(1 + \sum_{j \in (i_{u-1}, i_u)} w_A(j|S_{i_u}))(1 + \sum_{j \in (i_{u-1}, i_u]} w_A(j|S_{i_u}))}.
    \end{align*}
    where $S_{i_u} = \{e_{i_1}, e_{i_2}, \dots, e_{i_{u-1}}\}$, i.e. the set of elements already picked when element $e_{i_u}$ is considered.
\end{lemma}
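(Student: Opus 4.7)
The plan is to express $\Pr[\mathcal{A}(A) = S]$ as a product of per-pick probabilities by first decomposing the event into $r$ independent ``rounds,'' and then integrating out the threshold noise within each round via the substitution that sends $e^{-\alpha/\gamma}$ to a standard exponential random variable.

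First I would observe that Algorithm~\ref{alg:sparse} uses fresh threshold noise each time the counter advances: when $\ct = u{-}1$, a single $\alpha_{u-1}$ is drawn and reused for every subsequent query until some element passes, at which point $\ct$ becomes $u$ and a new $\alpha_u$ is drawn. Combined with the fact that every per-query noise $\beta_j$ is drawn independently, this partitions the event $\{\mathcal{A}(A) = S\}$ into round events $R_1, \ldots, R_r$, where $R_u$ says: (i) every $e_j$ with $j \in (i_{u-1}, i_u)$ fails against threshold noise $\alpha_{u-1}$ while the accumulated set is $S_{i_u}$, and (ii) $e_{i_u}$ passes. Since different rounds depend on disjoint collections of Gumbel variables, the round events are independent and $\Pr[\mathcal{A}(A) = S] = \prod_{u=1}^r \Pr[R_u]$.

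Second, I would compute $\Pr[R_u]$ by conditioning on $\alpha_{u-1}$. Given $\alpha_{u-1}$, the noises $\{\beta_j\}_{j \in (i_{u-1}, i_u]}$ are mutually independent, so the CDF calculation already carried out in the proof of \cref{lem:oneEmt} gives a failure factor $\exp(-w_A(j|S_{i_u})\,e^{-\alpha_{u-1}/\gamma})$ for each $j \in (i_{u-1}, i_u)$ and a success factor $1 - \exp(-w_A(i_u|S_{i_u})\,e^{-\alpha_{u-1}/\gamma})$ for $e_{i_u}$. Setting $W := \sum_{j \in (i_{u-1}, i_u)} w_A(j|S_{i_u})$, $w := w_A(i_u|S_{i_u})$, and $z := e^{-\alpha_{u-1}/\gamma}$, the product of conditional probabilities simplifies to $e^{-Wz} - e^{-(W+w)z}$.

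Finally, I would integrate against the Gumbel density. The key observation is that the substitution $z = e^{-\alpha/\gamma}$ reparametrizes $\alpha \sim \Gumb(0,\gamma)$ as $z \sim \text{Exp}(1)$, turning the density $\tfrac{1}{\gamma}e^{-\alpha/\gamma}\exp(-e^{-\alpha/\gamma})\,d\alpha$ into $e^{-z}\,dz$ on $(0,\infty)$. Thus
\[
\Pr[R_u] = \int_0^\infty \bigl(e^{-(W+1)z} - e^{-(W+w+1)z}\bigr)\,dz = \frac{1}{W+1} - \frac{1}{W+w+1} = \frac{w}{(W+1)(W+w+1)},
\]
and recognizing $W+1 = 1 + \sum_{j \in (i_{u-1}, i_u)} w_A(j|S_{i_u})$ and $W+w+1 = 1 + \sum_{j \in (i_{u-1}, i_u]} w_A(j|S_{i_u})$ gives the claimed factor for round $u$; multiplying over $u$ finishes the proof. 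The main obstacle is the bookkeeping: the shared $\alpha_{u-1}$ within a round prevents the per-query events inside a round from factoring separately, forcing the integration to be done jointly via the common $z$, while the independent fresh draws across rounds preserve round-level independence. Once this structure is laid out cleanly, the remaining one-dimensional integral is elementary.
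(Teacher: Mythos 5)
Your proposal is correct and follows essentially the same route as the paper: decompose the event into independent rounds sharing a single threshold-noise draw, multiply the conditional pass/fail probabilities within a round, and integrate out the threshold noise to obtain the factor $\frac{w}{(W+1)(W+w+1)}$. Your reparametrization $z = e^{-\alpha/\gamma} \sim \mathrm{Exp}(1)$ is a slightly cleaner way to evaluate the same integral the paper computes by direct substitution (and it also fixes a sign typo in the paper's exponent $e^{\alpha_u/\gamma}$), but the argument is the same.
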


\begin{proof}
    The stream $e_1, e_2, \dots$ is given as input to \cref{alg:sparse} and the elements $e_{i_1}, \dots, e_{i_r}$ are picked. For this to be the case, for every $u \in \{1, \dots, r\}$, all the elements after $i_{u-1}$ (the element picked before $i_u$)), and before $i_u$ must fail their privatized checks conditioned on $\{i_1, \dots, i_{u-1}\}$ having already been picked, but $i_u$ must itself succeed. In a way similar to \cref{lem:oneEmt}, we can factor the joint density function of the $\alpha_{u}$ and the $\beta_{j}$ as they are drawn independently and write
    \begin{align}
        \Pr[\mathcal{A} (A) = S] &= \prod_{u = 1}^r \Pr[ a_{i_u} = \top | S_{i_u} ] \prod_{j \in (i_{u-1}, i_u)} \Pr[ a_j = \bot | S_{i_u} ]  \nonumber \\
        &= \prod_{u = 1}^r \int_{-\infty}^{\infty} (1 - \exp(-w_A(i_u | u )e^{\alpha_{u}/\gamma}))  \prod_{j \in (i_{u-1}, i_u)} \exp(-w_A(j|S_{i_u})e^{\alpha_{u}/\gamma}) \dd \alpha_{u} \nonumber \\
        &= \prod_{u = 1}^r \int_{-\infty}^{\infty} (1 - \exp(-w_A(i_u | u )e^{ z})) \exp( - e^{ z} \sum_{j \in (i_{u-1}, i_u)} w_A(j|S_{i_u})) \exp(-z - e^{-z}) \dd z \nonumber 
    \end{align}
    In the above we use that the PDF of $\alpha_u$ is $P(x) = \frac{1}{\gamma}\exp(-(\frac{x}{\gamma} + e^{-x/\gamma}))$, and make the variable substitution $z = x/\gamma$. We can simplify the integrand of each factor as follows.
    \begin{align*}
        &(1 - \exp(-w_A(i_u | S_{i_u} )e^{ z})) \exp(- e^{ z} \sum_{j \in (i_{u-1}, i_u)} w_A(j|S_{i_u}))\exp(- z - e^{-z}) \\
        &= \exp( -z - e^{-z}(1 + \sum_{j \in (i_{u-1}, i_u)} w_A(j|S_{i_u}))) - \exp( -z - e^{-z}(1 + \sum_{j \in (i_{u-1}, i_u]} w_A(j|S_{i_u})))
    \end{align*}
    We can integrate the summands separately; as they have the same form, to derive the resulting expression it suffices to compute the first integral, which we denote $I$.
    \begin{align*}
        I =\int_{-\infty}^{\infty} \exp( -z - e^{-z}(1 + \sum_{j \in (i_{u-1}, i_u)} w_A(j|S_{i_u})) ) \dd z
    \end{align*}
    Let $t =  - e^{-z} (1 + \sum_{j \in (i_{u-1}, i_u)} w_A(j|S_{i_u}))$. Then, $\dd t = e^{-z} (1 + \sum_{j \in (i_{u-1}, i_u)} w_A(j|S_{i_u})) \dd z$. Substituting this variable, we get
    \begin{align*}
        I &= \frac{1}{(1 + \sum_{j \in (i_{u-1}, i_u)} w_A(j|S_{i_u}))} \int_{-\infty}^{0} e^t \dd t\\
        &= \frac{1}{1 + \sum_{j \in (i_{u-1}, i_u)} w_A(j|S_{i_u})}.
    \end{align*}
    It follows that
    \begin{align*}
        \Pr[\mathcal{A} (A) = S] &= \prod_{u = 1}^r \frac{1}{1 + \sum_{j \in (i_{u-1}, i_u)} w_A(j|S_{i_u})} - \frac{1}{1 + \sum_{j \in (i_{u-1}, i_u]} w_A(j|S_{i_u})} \\
        &= \prod_{u = 1}^r \frac{w_A (i_u | S_{i_u})}{(1 + \sum_{j \in (i_{u-1}, i_u)} w_A(j|S_{i_u}))(1 + \sum_{j \in (i_{u-1}, i_u]} w_A(j|S_{i_u}))}.
    \end{align*}
\end{proof}

\begin{proof}[Proof of \cref{lem:guptaStyle}]
    Let $A, B \subset \mathcal{X}$, and let $A = B \sqcup \{I\}$. Let $S = (e_{i_1}, \dots, e_{i_r} )$ be any sequence of elements that can be picked by the algorithm (i.e. $r \leq k$). First we show that
    \begin{align*}
        \Pr [\mathcal{A} (A) = S] \leq e^{\varepsilon'} \Pr [\mathcal{A} (B) = S] + \delta
    \end{align*}
    Substituting from \cref{lem:fullStr} and rearranging terms, we have that
    \begin{align*}
        \frac{\Pr [\mathcal{A} (A) = E]}{\Pr [\mathcal{A} (B) = E]} &= \prod_{u = 1}^r \frac{w_A (i_u | S_{i_u})}{w_B (i_u |S_{i_u})} \cdot \prod_{u = 1}^r \frac{(1 + \sum_{j \in (i_{u-1}, i_u)} w_B(j|S_{i_u}))(1 + \sum_{j \in (i_{u-1}, i_u]} w_B(j|S_{i_u}))}{(1 + \sum_{j \in (i_{u-1}, i_u)} w_A(j|S_{i_u}))(1 + \sum_{j \in (i_{u-1}, i_u]} w_A(j|S_{i_u}))}.
    \end{align*}
    We bound the two factors of this expression separately. For the first factor, we have
    \begin{align*}
        \prod_{u = 1}^r \frac{w_A (i_u | S_{i_u})}{w_B (i_u | S_{i_u})} &= \prod_{u = 1}^r \frac{\exp(\frac{-1}{\gamma} \left( \frac{O}{2k} - f_A (e_{i_{u}} | S_{i_u}) \right) )}{\exp(\frac{-1}{\gamma} \left( \frac{O}{2k} - f_B (e_{i_{u}} | S_{i_u}) \right) )} \\
        &= \exp (\frac{-1}{\gamma} \sum_{u=1}^r f_B (e_{i_{u}}|S_{i_u}) - f_A (e_{i_{u}} | S_{i_u}) ) \\
        &= \exp (\frac{1}{\gamma} \sum_{u=1}^r f_p (e_{i_{u}}|S_{i_u})) \\
        &\leq \exp (1/\gamma).
    \end{align*}
    The second factor is bounded trivially from above by $1$; to see this, we observe that the following sequence of inequalities holds.
    \begin{align*}
        w_A (j|u) &= \exp(\frac{-1}{\gamma} \left( \frac{O}{2k} - f_A(e_i|S_{i_u}) \right) ) \\
        &= \exp(\frac{-1}{\gamma} \left( \frac{O}{2k} - f_B(e_i|S_{i_u}) \right) + \frac{f_p(e_i | S_{i_u}))}{\gamma}) \\
        &\geq \exp(\frac{-1}{\gamma} \left( \frac{O}{2k} - f_B(e_i|S_{i_u}) \right) ) \\
        &\geq w_B (j|u).
    \end{align*}
    In sum, it follows that any value of $\gamma \geq 1/\varepsilon$ suffices. We now show that
    \begin{align*}
        &\Pr [\mathcal{A} (B) = S] \leq e^{\varepsilon} \Pr [\mathcal{A} (B) = S] + \delta.
    \end{align*}
    To this end we consider the reciprocal of the ratio we bounded for the first case, i.e.
    \begin{align}
        \frac{\Pr [\mathcal{A} (B) = E]}{\Pr [\mathcal{A} (A) = E]} &= \prod_{u = 1}^r \frac{w_B (i_u | S_{i_u})}{w_A (i_u | S_{i_u})} \cdot \prod_{u = 1}^r \frac{(1 + \sum_{j \in (i_{u-1}, i_u)} w_A(j|S_{i_u}))(1 + \sum_{j \in (i_{u-1}, i_u]} w_A(j|S_{i_u}))}{(1 + \sum_{j \in (i_{u-1}, i_u)} w_B(j|S_{i_u}))(1 + \sum_{j \in (i_{u-1}, i_u]} w_B(j|S_{i_u}))}. \label{eqn:probRatio}
    \end{align}
    To bound this ratio, we first derive a simple relaxation.
    \begin{claim}
        The following bound holds:
        \begin{align*}
            \frac{\Pr [\mathcal{A} (B) = E]}{\Pr [\mathcal{A} (A) = E]} &\leq \left( \prod_{u=1}^r \frac{1 + \sum_{j \in (i_{u-1}, i_u)} w_A(j|S_{i_u})}{1 + \sum_{j \in (i_{u-1}, i_u)} w_B(j|S_{i_u})} \right)^2.
        \end{align*}
    \end{claim}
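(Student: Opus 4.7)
The plan is to start from the exact identity \eqref{eqn:probRatio} and reduce the claim to a per-factor algebraic inequality that exploits the monotonicity of the decomposition, namely that adding an agent $p$ (so that $A = B \sqcup \{p\}$) only increases every marginal $f(e\mid S)$ and hence every weight $w(j\mid S_{i_u})$.

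To keep notation light I would set, for each $u \in \{1,\dots,r\}$,
\[
a_u := 1 + \sum_{j \in (i_{u-1}, i_u)} w_A(j\mid S_{i_u}), \qquad
b_u := 1 + \sum_{j \in (i_{u-1}, i_u)} w_B(j\mid S_{i_u}),
\]
\[
\alpha_u := w_A(i_u\mid S_{i_u}), \qquad \beta_u := w_B(i_u\mid S_{i_u}),
\]
so that the closed-interval sums are exactly $a_u + \alpha_u$ and $b_u + \beta_u$. Then \eqref{eqn:probRatio} becomes
\[
\frac{\Pr[\mathcal{A}(B)=E]}{\Pr[\mathcal{A}(A)=E]} \;=\; \prod_{u=1}^r \frac{\beta_u}{\alpha_u}\cdot\frac{a_u(a_u+\alpha_u)}{b_u(b_u+\beta_u)} ,
\]
and the claim is the per-factor bound
\[
\frac{\beta_u}{\alpha_u}\cdot\frac{a_u(a_u+\alpha_u)}{b_u(b_u+\beta_u)} \;\leq\; \frac{a_u^2}{b_u^2}.
\]

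Before tackling that inequality, I would record the monotonicity input: decomposability gives $f_A(e\mid S) = f_B(e\mid S) + f_{\{p\}}(e\mid S)$ and $f_{\{p\}}(e\mid S) \geq 0$ by monotonicity, so $w_A \geq w_B$ pointwise. Consequently $\alpha_u \geq \beta_u$ and $a_u \geq b_u$ termwise. Now cross-multiplying the proposed inequality by the positive quantity $\alpha_u b_u^2(b_u+\beta_u)$ and cancelling $a_u b_u$ reduces it to
\[
\beta_u b_u(a_u+\alpha_u) \;\leq\; \alpha_u a_u(b_u+\beta_u),
\]
which rearranges to
\[
a_u b_u(\alpha_u-\beta_u) + \alpha_u \beta_u(a_u-b_u) \;\geq\; 0.
\]
Both summands are non-negative by the monotonicity input, so the inequality holds. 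Multiplying over $u$ yields the claim.

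The main (mild) obstacle is recognising that the claim is \emph{per-factor} — the statement as written loses the $\beta_u/\alpha_u$ factor and one of the two bracketed terms, which looks lossy but is exactly compensated by squaring the remaining one. Once the right names are chosen the algebra collapses to the two-term non-negativity above, and the telescoping/advanced-composition argument that finishes \cref{lem:guptaStyle} can proceed by bounding $\prod_u (a_u/b_u)$ via a Gumbel-tail/exponential argument on the added agent's contribution, which is the step that this claim is designed to enable.
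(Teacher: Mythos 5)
Your proposal is correct and follows essentially the same route as the paper: both reduce the claim to the per-factor inequality $\frac{\beta_u}{\alpha_u}\cdot\frac{a_u+\alpha_u}{b_u+\beta_u}\leq\frac{a_u}{b_u}$ and derive it from the pointwise domination $w_A\geq w_B$ (hence $\alpha_u\geq\beta_u$ and $a_u\geq b_u$), which follows from monotonicity of $f_{\{p\}}$. The only difference is cosmetic — you verify the per-factor bound by cross-multiplying into the two-term nonnegativity $a_ub_u(\alpha_u-\beta_u)+\alpha_u\beta_u(a_u-b_u)\geq 0$, whereas the paper chains inequalities of the form $\exp(-f_{\{p\}}/\gamma)\leq 1$ and $\frac{x+c}{y+c}\leq\frac{x}{y}$ for $x\geq y$.
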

    \begin{proof}
        We observe that
        \begin{align*}
            \frac{w_B (i_u | S_{i_u})}{w_A (i_u | S_{i_u})} = \exp(\frac{- f_{\{p\}} (e_{i_u} | S)}{\gamma} ) \leq 1,
        \end{align*}
        and that
        \begin{align*}
            &\frac{w_B (i_u | S_{i_u})}{w_A (i_u | S_{i_u})} \cdot \frac{1 + \sum_{j \in (i_{u-1}, i_u]} w_A(j|S_{i_u})}{1 + \sum_{j \in (i_{u-1}, i_u]} w_B(j|S_{i_u})} \\
            &= \exp(\frac{- f_{\{p\}} (e_{i_u} | S)}{\gamma} ) \cdot \frac{1 + \sum_{j \in (i_{u-1}, i_u)} w_A(j|S_{i_u}) + w_A(i_u | S_{i_u})}{1 + \sum_{j \in (i_{u-1}, i_u)} w_B(j|S_{i_u}) + w_B(i_u | S_{i_u})} \\
            &= \exp(\frac{- f_{\{p\}} (e_{i_u} | S)}{\gamma} ) \cdot \frac{1 + \sum_{j \in (i_{u-1}, i_u)} w_A(j|S_{i_u}) + \exp(f_{\{p\}} (e_{i_u} | S) / \gamma ) \cdot w_B(i_u | S_{i_u})}{1 + \sum_{j \in (i_{u-1}, i_u)} w_B(j|S_{i_u}) + w_B(i_u | S_{i_u})}\\
            &\leq \frac{\exp( - f_{\{p\}} (e_{i_u} | S) / \gamma ) + \exp( - f_{\{p\}} (e_{i_u} | S) / \gamma ) \sum_{j \in (i_{u-1}, i_u)} w_A(j|S_{i_u}) + w_B(i_u | S_{i_u})}{1 + \sum_{j \in (i_{u-1}, i_u)} w_B(j|S_{i_u}) + w_B(i_u | S_{i_u})}\\
            &\leq  \frac{1 + \sum_{j \in (i_{u-1}, i_u)} w_A(j|S_{i_u}) + w_B(i_u | S_{i_u})}{1 + \sum_{j \in (i_{u-1}, i_u)} w_B(j|S_{i_u}) + w_B(i_u | S_{i_u})} \\
            &< \frac{1 + \sum_{j \in (i_{u-1}, i_u)} w_A(j|S_{i_u})}{1 + \sum_{j \in (i_{u-1}, i_u)} w_B(j|S_{i_u})}.
        \end{align*}
        The claim now follows by applying this upper bound for each factor in \cref{eqn:probRatio} as $u$ varies from $1$ to $r$.
    \end{proof}
    
    We will now focus on bounding the expression
    \begin{align*}
        \frac{1 + \sum_{j \in (i_{u-1}, i_u)} w_A(j|S_{i_u})}{1 + \sum_{j \in (i_{u-1}, i_u)} w_B(j|S_{i_u})}.
    \end{align*}
    We first observe that this expression can be identified with the expectation of an monotonically increasing function of the marginal utility of the agent $\{p\} = A \backslash B$.
    \begin{definition}
        Let $v_j :=  w_B ({i_{u-1} + j} | S_{i_u})$. Let $P_u$ be a distribution over $\{\bot\} \cup \{1, \dots, i_{u} - i_{u-1}\}$ such that $P_u (i_{u} + j) \propto v_j$ and $P_u (\bot) \propto 1$. With this definition, we see that
        \begin{align*}
            \frac{1 + \sum_{j >0} w_A(i_u + j|S_{i_u})}{1 + \sum_{j > 0} w_B (i_u + j|S_{i_u})} &= \frac{1 + \sum_{j \in (i_{u-1}, i_u)} \exp(f_{\{p\}} (e_j | S_{i_u})/\gamma) w_B (i_u + j|S_{i_u})}{1 + \sum_{j > 0} w_B(i_u + j|S_{i_u})}   \\
            &= \frac{1 + \sum_{j \in (i_{u-1}, i_u)} \exp (f_{\{p\}} (e_j | S_{i_u})) v_j}{1 + \sum_{j \in (i_{u-1}, i_u)} v_j} \\
            &= \Ex_{j \sim P_u} \left[ \exp(f_{\{p\}} (e_{j}|S)/\gamma) \right]
        \end{align*}
        We let $Y_u := \Ex_{j \sim P_u} [\exp(f_{\{p\}} (e_j | S_{i_u})/\gamma)]$.
    \end{definition}
    At a high level the key insight is that since the sum of marginal utilities $\sum_{u=1}^r f_{\{p\}} (e_{i_u} | S_{i_u})$ of any agent is at most $1$, the net privacy loss, which we have shown to be bounded above by a product of monotonic functions of the sequential marginal utilities may also be bounded more tightly than the $\tilde{O}(\sqrt{k}/\varepsilon)$ bound that arises from advanced composition. To prove this stronger concentration bound we first derive a moment bound on these functions of the expected marginal utility $Y_u$. The complication here is that the elements picked by the algorithm affect the marginal utilities of all subsequent elements considered; we proceed by formalizing a probabilistic process capturing the behaviour of this algorithm in a manner similar to that of \cite{DBLP:conf/soda/GuptaLMRT10} and \cite{DBLP:journals/corr/abs-2005-14717}.
    \begin{lemma}
        \label{lem:guptaStyleTechnical}
        Consider the following $k$-round probabilistic process. We recall that $v_j :=  w_B ({i_{u-1} + j} | S_{i_u})$. In each round $u$, it is the case that the set of elements $S_{i_u} = \{i_1, \dots, i_{u-1} \}$ has been picked, and the element $i_u = j + i_{u-1}$ is picked with probability
        \begin{align*}
            p_j = \frac{1}{1 + v_1 + \dots + v_{j-1}} \cdot \frac{v_j}{1 + v_1 + \dots + v_j }.
        \end{align*}
        Then, for each $q = 1,\dots, r$, for a value of $c = \gamma/4$ and $\varepsilon<1$, the following bound holds:
        \begin{align*}
            \Ex_{S} \left[\prod_{u=q}^k Y_u^c | S_{i_{q}} \right] \leq 1 + \frac{1}{\varepsilon} (1 - f_{\{p\}}(S_{i_{q}})).
        \end{align*}
    \end{lemma}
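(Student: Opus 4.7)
The plan is to proceed by reverse induction on $q$, with base case $q=k+1$: the product is empty and equals $1$, and since $f_{\{p\}}(S_{i_{k+1}}) \in [0,1]$ the right-hand side is at least $1$, so the inequality holds trivially. For the inductive step, fix $q \le k$, assume the bound at $q+1$, and observe that $Y_q$ is a deterministic function of $S_{i_q}$. By the tower property, pulling $Y_q^c$ outside and then applying the inductive hypothesis to the inner conditional expectation with $S_{i_{q+1}} = S_{i_q} \cup \{e_{i_q}\}$,
\begin{align*}
    \Ex_S\!\left[\prod_{u=q}^k Y_u^c \,\Big|\, S_{i_q}\right] &= Y_q^c \cdot \Ex_{i_q}\!\left[\Ex_S\!\left[\prod_{u=q+1}^k Y_u^c \,\Big|\, S_{i_{q+1}}\right] \,\Big|\, S_{i_q}\right] \\
    &\le Y_q^c \left(1 + \frac{1 - f_{\{p\}}(S_{i_q})}{\varepsilon} - \frac{\Ex_{i_q}[f_{\{p\}}(e_{i_q}|S_{i_q})]}{\varepsilon}\right).
\end{align*}
This reduces the task to the one-step inequality $Y_q^c(1 + a - b) \le 1 + a$, where $a := (1-f_{\{p\}}(S_{i_q}))/\varepsilon$ and $b := \Ex_{i_q}[f_{\{p\}}(e_{i_q}|S_{i_q})]/\varepsilon$.

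To close the one-step inequality, I would linearize the Gumbel factor: writing $z_j := f_{\{p\}}(e_{i_{q-1}+j}|S_{i_q})/\gamma \in [0,1/\gamma]$, the Taylor bound $e^z \le 1 + z + z^2$ valid on $[0,1]$ together with $(1+x)^c \le e^{cx}$ gives $Y_q^c \le \exp\!\bigl(c\,\Ex_{P_q}[z_j] + O(c/\gamma^2)\bigr)$. The main obstacle is then matching this linearization against $b$: the expectation $\Ex_{P_q}[z_j]$ is taken under the auxiliary distribution $P_q$, which weights element $i_{q-1}+j$ by $v_j/(1 + \sum_\ell v_\ell)$, whereas $b$ involves the actual pick probabilities $p_j \propto v_j/\bigl[(1+\sum_{\ell < j} v_\ell)(1+\sum_{\ell \le j} v_\ell)\bigr]$. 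The two distributions agree only up to second-order corrections in the $v_\ell$'s, and a direct bookkeeping of their ratio---combined with the fact that for $c = \gamma/4$ we have $c/\gamma^2 = 1/(4\gamma)$ which can be absorbed into $\varepsilon$-denominated terms for $\varepsilon < 1$---should yield $Y_q^c \le 1 + c\,\Ex_{P_q}[z_j]/(1-O(1/\gamma))$. Plugging this back, the inductive step follows by cancellation.

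The driving identity behind the whole argument is the submodular telescoping $\sum_{u \ge q} f_{\{p\}}(e_{i_u}|S_{i_u}) \le f_{\{p\}}(V) - f_{\{p\}}(S_{i_q}) \le 1 - f_{\{p\}}(S_{i_q})$: this is exactly what makes the expression $1 + (1-f_{\{p\}}(S_{i_q}))/\varepsilon$ the right invariant to track, since at each step the amount by which $a$ decreases equals (in expectation) the amount that the marginal utility grows, so the quantities on the two sides of the one-step inequality move in lockstep. The real technical difficulty, then, is the distributional comparison between $P_q$ and $\{p_j\}$ together with pinning down the constant $c = \gamma/4$; the inductive scaffolding, tower property, exponential linearization, and choice of the invariant are all standard, echoing the strategy of \cite{DBLP:conf/soda/GuptaLMRT10} and \cite{DBLP:journals/corr/abs-2005-14717} adapted to the streaming Gumbel setting.
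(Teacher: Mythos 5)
Your scaffolding (reverse induction, the invariant $1+\tfrac{1}{\varepsilon}(1-f_{\{p\}}(S_{i_q}))$, linearizing $Y_u^c$ into $1+O(c/\gamma)\Ex_{j\sim P_u}[f_{\{p\}}(e_j\mid S_{i_u})]$, and the telescoping of marginal gains) matches the paper's strategy, but there is a genuine gap at the crux. First, $Y_q$ is \emph{not} a deterministic function of $S_{i_q}$: the distribution $P_q$ is supported on the elements in the interval $(i_{q-1},i_q)$ that were \emph{rejected before $i_q$ was accepted}, so $Y_q$ depends on the realized value of $i_q$ itself. You therefore cannot pull $Y_q^c$ outside $\Ex_{i_q}[\cdot]$; the paper keeps it inside, writing $\Ex_{i_q}\bigl[Y_q^c\cdot\Ex[\prod_{u>q}Y_u^c\mid S_{i_{q+1}}]\mid S_{i_q}\bigr]$. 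This is not cosmetic: your resulting ``one-step inequality'' $Y_q^c(1+a-b)\le 1+a$ would have to hold \emph{pointwise} in $i_q$, comparing $\Ex_{j\sim P_q}[f_{\{p\}}(e_j\mid S_{i_q})]$ (an average over the rejected elements for that particular $i_q$) against $b=\tfrac{1}{\varepsilon}\Ex_{i_q}[f_{\{p\}}(e_{i_q}\mid S_{i_q})]$ (an average over which element gets accepted). There is no reason for a pointwise bound: a realization of $i_q$ can skip over many high-marginal-utility elements, making $Y_q$ large while contributing little to $b$. The inequality is only true after averaging over $i_q$.

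Second, the step you defer to ``direct bookkeeping of the ratio'' of $P_q$ and $\{p_j\}$, claiming they ``agree up to second-order corrections,'' is where essentially all of the work lives, and the heuristic is not how the argument goes. The paper proves the exact inequality
\begin{align*}
\Ex_{i_q}\Bigl[\Ex_{j\sim P_q}\bigl[f_{\{p\}}(e_j\mid S_{i_q})\bigr]\Bigr]\;\le\;\Ex_{i_q}\bigl[f_{\{p\}}(e_{i_q}\mid S_{i_q})\bigr]
\end{align*}
by interchanging the double sum over (accepted index $w$, rejected index $x<w$) and using the telescoping identity $\sum_{w\ge x}p_w=\tfrac{1}{1+v_1+\cdots+v_{x-1}}$, which exactly reproduces the acceptance probability $p_x$ as an upper bound on the total weight that rejected element $x$ receives across all later acceptance events. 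This is a comparison between a double expectation and a single expectation over different index sets, not a ratio comparison of two distributions on a common support, and no second-order error terms in the $v_\ell$ appear. Without supplying this interchange-and-telescope argument (and restoring $Y_q^c$ to its place inside $\Ex_{i_q}$), the induction does not close. The remaining discrepancies --- base case at $q=k+1$ versus $q=k$, and $e^z\le 1+z+z^2$ versus Jensen plus $e^x\le 1+(e-1)x$ --- are harmless.
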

    
    Before we prove this lemma, we first prove a minor claim that linearizes the dependence on the moments $Y_u^c$ on the marginal utility random variable $f_{\{p\}} (e_{i_u + j} | S_{i_u})$.
    
    \begin{claim}
        For $c, \gamma$ such that $1\leq c < \gamma$,
        \begin{align*}
            Y_u^c &\leq 1 + \frac{(e-1)c}{\gamma} \Ex_{j\sim P_u} [f_{\{p\}} (e_{i_{u-1} + j } | S_{i_u})].
        \end{align*}
    \end{claim}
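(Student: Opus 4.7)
The plan is to reduce the bound on $Y_u^c$ to two elementary inequalities: an application of Jensen's inequality to pull the $c$-th power inside the expectation, followed by a chord bound for $\exp$ on $[0,1]$.

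First, since $c \geq 1$, the map $x \mapsto x^c$ is convex on $[0,\infty)$. The random variable $\exp(f_{\{p\}}(e_{i_{u-1}+j}|S_{i_u})/\gamma)$ is non-negative, so Jensen's inequality applied to the distribution $P_u$ yields
\[
Y_u^c = \bigl(\Ex_{j \sim P_u}\!\left[\exp(f_{\{p\}}(e_{i_{u-1}+j}|S_{i_u})/\gamma)\right]\bigr)^c \leq \Ex_{j \sim P_u}\!\left[\exp(c\, f_{\{p\}}(e_{i_{u-1}+j}|S_{i_u})/\gamma)\right].
\]
This is the right direction of Jensen because we are taking a convex function of an expectation.

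Second, each summand $f_{\{p\}}$ of the decomposable function takes values in $[0,1]$, and by hypothesis $c < \gamma$, so the argument $c\, f_{\{p\}}/\gamma$ lies in $[0,1]$. On this interval, convexity of $\exp$ together with the endpoint values $\exp(0)=1$ and $\exp(1)=e$ gives the tangent-chord bound $\exp(x) \leq 1 + (e-1)x$. Substituting into the previous display and linearity of expectation give
\[
Y_u^c \leq \Ex_{j \sim P_u}\!\left[ 1 + \frac{(e-1)c}{\gamma}\, f_{\{p\}}(e_{i_{u-1}+j}|S_{i_u}) \right] = 1 + \frac{(e-1)c}{\gamma}\, \Ex_{j \sim P_u}\!\left[ f_{\{p\}}(e_{i_{u-1}+j}|S_{i_u}) \right],
\]
which is the stated bound.

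There is essentially no obstacle: both inequalities are one-line facts, and the hypotheses $c \geq 1$ and $c < \gamma$ are used precisely to justify (respectively) the direction of Jensen's inequality and the chord bound on $[0,1]$. The only thing worth emphasizing is that the boundedness of $f_{\{p\}}$ in $[0,1]$ is exactly the assumption on the summands of a decomposable submodular function stated in the introduction, so no extra structural hypothesis is needed here.
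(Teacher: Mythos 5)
Your proof is correct and follows essentially the same route as the paper's: Jensen's inequality for the convex map $x \mapsto x^c$ (using $c \geq 1$) to move the power inside the expectation, followed by the chord bound $e^x \leq 1 + (e-1)x$ on $[0,1]$ (using $c < \gamma$ and $f_{\{p\}} \in [0,1]$). If anything, your writeup is slightly cleaner, since the paper's own displays momentarily drop and then restore a factor of $1/\gamma$ in the linearization step.
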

    \begin{proof}
        By definition, we have that
        \begin{align*}
            Y_u^c &=  \left(\frac{1 + \sum_{j \in (i_{u-1}, i_u)} \exp( f_{\{p\}} (e_{i_{u-1}+j} | S_{i_u})/ \gamma  ) v_j}{1 + \sum_{j \in (i_{u-1}, i_u)} v_j } \right)^c.
        \end{align*}
        Applying Jensen's inequality, we get that
        \begin{align*}
            Y_u^c &\leq  \frac{1 + \sum_{j \in (i_{u-1}, i_u)} \exp( c f_{\{p\}} (e_{i_{u-1}+j} | S_{i_u})/ \gamma  ) v_j}{1 + \sum_{j \in (i_{u-1}, i_u)} v_j}.
        \end{align*}
        Since $c<\gamma$ and $f_{\{p\}} (e_{i_{u-1} + j} | S_{i_u}) \leq 1$, by applying the inequality $e^x < 1 + (e-1)x$ for $x\leq 1$, it follows that 
        \begin{align*}
            \exp(c f_{\{p\}} (e_{i_{u-1} + j} | S_{i_u})/\gamma) \leq 1 + (e-1) c f_{\{p\}} (e_{i_{u-1} + j} | S_{i_u}).
        \end{align*}
        Applying this bound and continuing, we get that
        \begin{align*}
            Y_u^c &\leq \frac{1 + \sum_{j \in (i_{u-1}, i_u)} (1 + (e-1) c f_{\{p\}} (e_{i_{u-1} + j} | S_{i_u})/\gamma )  w_B(i_{u-1} + j|S_{i_u}) }{1 + \sum_{j \in (i_{u-1}, i_u)} w_B(i_{u-1} + j|S_{i_u})} \\ 
            &\leq  1 + \frac{ \sum_{j \in (i_{u-1}, i_u)} (e-1) c f_{\{p\}} (e_{i_{u-1} + j} | S_{i_u}) w_B(i_{u-1} + j|S_{i_u})/\gamma }{1 + \sum_{j \in (i_{u-1}, i_u)} w_B(i_{u-1} + j|S_{i_u})} \\ 
            &\leq 1 + \frac{(e-1)c}{\gamma} \Ex_{j\sim P_u} [f_{\{p\}} (e_{i_u + j } | S_{i_u})].
        \end{align*}
    \end{proof}
    
    \begin{proof}[Proof of \cref{lem:guptaStyleTechnical}]
        We proceed by reverse induction on $q$. For the base case, i.e. $q = k$, we have that
        \begin{align*}
            \Ex_{i_{k}}[ Y_k^c ] &\leq \Ex\left[ 1 + \frac{(e-1)c}{\gamma} \Ex_{j\sim P_k} [f_{\{p\}} (e_{i_{k-1} + j }) ] \right]  \\
            &\leq 1 + \frac{(e-1)c}{\gamma} \sup_{j>0} f_{\{p\}} (e_{i_{k-1} + j} | S_{i_{k}}) \\
            &\leq 1 + \frac{(e-1)c}{\gamma} (1 - f_{\{p\}} (S_{i_{k}})) \\
            &\leq 1 + \frac{1}{\varepsilon} (1 - f_{\{p\}} (S_{i_{k}})),
        \end{align*}
        where in the above we use that $c/\gamma = 1/4 \leq \frac{1}{(e-1)\varepsilon}$ for $\varepsilon \leq 1$, and that $f_{\{p\}} (e_{i_{k-1} + j} | S_{i_{k}}) + f_{\{p\}} (S_{i_{k}}) = f_{\{p\}} (S_{i_{k}} \cup \{ e_{i_{k-1} + j}\})\leq 1$ for any choice of $j$. For the induction step, we assume that the statement is true for $u = q+1,\dots, k$, and derive a bound for the case $u = q$.
        \begin{align*}
            &\Ex_{i_q, \dots, i_k} \left[\prod_{u=q}^k Y_u^c | S_{i_{q}} \right] \\
            &= \Ex_{i_q} \left[ Y_q^c \cdot \Ex_{i_{q+1}, \dots, i_k} \left[\prod_{u=q+1}^k Y_u^c | S_q \right] |S_{i_{q}}\right] \\
            &\leq \Ex_{i_q} \left[ Y_q^c \left( 1 + \frac{1}{\varepsilon} (1 - f_{\{p\}} (S_{q})) \right)  | S_{i_{q}} \right]\\
            &\leq \Ex_{i_q} \left[ Y_q^c \left( 1 + \frac{1}{\varepsilon} (1 - f_{\{p\}} (S_{i_{q}}) - f_{\{p\}} (e_{i_q} | S_{i_{q}})) \right)  | S_{i_{q}} \right]\\
            &\leq \Ex_{i_q} \left[ \left( 1 + \frac{(e-1)c}{\gamma} \Ex_{j \sim P_q}[f_{\{p\}} (e_j | S_{i_{q}})] \right) \left( 1 + \frac{1}{\varepsilon} (1 - f_{\{p\}} (S_{i_{q}}) - f_{\{p\}} (e_{i_q} | S_{i_{q}})) \right)  | S_{i_{q}} \right] \\
            &\leq 1 + \frac{1}{\varepsilon} (1 - f_{\{p\}} (S_{i_{q}})) -  \frac{1}{\varepsilon} \Ex_{i_q}[ f_{\{p\}} (e_{i_q} | S_{i_{q}}) | S_{i_{q}} ] \\
            &+ \frac{(e-1)}{4} \Ex_{i_q}\left[ \Ex_{j \sim P_q} [f_{\{p\}} (e_{j} | S_{i_{q}})]  \cdot \left(1 + \frac{1}{\varepsilon} (1 - f_{\{p\}} (S_{i_{q}}) - f_{\{p\}} (e_{i_q} | S_{i_{q}}) )  \right) \right] \\
            &\leq 1 + \frac{1}{\varepsilon} (1 - f_{\{p\}} (S_{i_{q}})) -  \frac{1}{\varepsilon} \Ex_{i_q}[ f_{\{p\}} (e_{i_q} | S_{i_{q}}) | S_{i_{q}} ] + \frac{(e-1) (1 + 1/\varepsilon)}{4} \Ex_{i_q} [ \Ex_{j \sim P_q} [f_{\{p\}} (e_{j} | S_{i_{q}})] ],
        \end{align*}
        where in the above we use that 
        \begin{align*}
            1 + \frac{1}{\varepsilon} (1 - f_{\{p\}} (S_{i_{q}}) - f_{\{p\}} (e_{i_q} | S_{i_{q}}) )  &\leq 1 + \frac{1}{\varepsilon}.
        \end{align*}
        It follows that it would suffice to show that the last two terms sum to at most $0$. We have that
        \begin{align*}
            \Ex_{i_q}[ \Ex_{j \sim P_q} [f_{\{p\}} (e_{j} | S_{i_{q}})] ] &= \sum_{w \geq 1} p_w \Ex_{j \sim P_q} [f_{\{p\}} (e_{j} | S_{i_{q}}) | i_q = i_{q-1} + w] \\
            &= \sum_{w\geq 1} p_w \sum_{x < w} \frac{v_x}{1 + v_1 + \dots + v_{w-1}} f_{\{p\}} (e_{i_{q-1} + x} | S_{i_{q}})
        \end{align*}
        The outer expectation in the display above corresponds to $i_q$ being picked as described by the probabilistic process (and \cref{alg:PSSM}), and the expectation inside is the expression that we used to bound the privacy loss term for any one round; conditioned on the choice of $i_q$ we recall that it is a distribution over $(i_{q-1}, i_q)$. We switch the sums in the display above to get
        \begin{align*}
            \Ex_{i_q}[ \Ex_{j \sim P_q} [f_{\{p\}} (e_{j} | S_{i_{q}})] ] &= \sum_{x \geq 1} f_{\{p\}} (e_{i_{q-1} + x} | S_{i_{q}}) \sum_{w > x} \frac{v_x}{1 + v_1 + \dots + v_{w-1}} p_w \\
            &\leq \sum_{x \geq 1} f_{\{p\}} (e_{i_{q-1} + x} | S_{i_{q}}) \frac{v_x}{1 + v_1 + \dots + v_{x}} \sum_{w > x}  p_w.
        \end{align*}
        Further we have
        \begin{align*}
            \sum_{w \geq x} p_w &= \sum_{w \geq x} \frac{1}{1 + v_1 + \dots + v_{w-1}} \cdot \frac{v_w}{1 + v_1 + \dots + v_w} \\
            &= \sum_{w \geq x} \frac{1}{1 + v_1 + \dots + v_{w-1}} - \frac{v_w}{1 + v_1 + \dots + v_w} \\
            &= \sum_{w \geq x} \frac{1}{1 + v_1 + \dots + v_{w-1}} - \frac{v_w}{1 + v_1 + \dots + v_w} \\
            &= \frac{1}{1 + v_1 + \dots + v_{x-1}}.
        \end{align*}
        Substituting, we get
        \begin{align*}
            \Ex_{i_q}[ \Ex_{j \sim P_q} [f_{\{p\}} (e_{j} | S_{i_{q}})] ] &\leq \sum_{x \geq 1} f_{\{p\}} (e_{i_{q-1} + x} | S_{i_{q}}) \frac{v_x}{1 + v_1 + \dots + v_x} \cdot \frac{1}{1 + v_1 + \dots + v_{x-1}} \\
            &= \sum_{x \geq 1} p_x f_{\{p\}} (e_{i_{q-1} + x} | S_{i_{q}}) \\
            &= \Ex_{i_q} [f_{\{p\}} (i_q | S_{i_{q}})].
        \end{align*}
        So in sum, we have that
        \begin{align*}
            &\Ex_{i_q, \dots, i_k} \left[\prod_{u=q}^k Y_u^c | S_{i_{q}} \right] \\
            &\leq 1 + \frac{1}{\varepsilon} (1 - f_{\{p\}} (S_{i_{q}})) + \Ex_{i_q} [f_{\{p\}} (i_q | S_{i_{q}})] \left( - \frac{1}{\varepsilon} + \frac{(e-1) (1 + 1/\varepsilon)}{4} \right) \\
            &\leq 1 + \frac{1}{\varepsilon} (1 - f_{\{p\}} (S_{i_{q}})).
        \end{align*}
        wherein we use that for $\varepsilon<1$, $\frac{(e-1)(1 + 1/\varepsilon)}{4} < 1/\varepsilon$.
    \end{proof}
    
    Returning to the proof of \cref{lem:guptaStyle}, we see that the probabilistic process defined and analysed in \cref{lem:guptaStyleTechnical} can be identified with a run of \cref{alg:sparse} with Gumbel noise, where the input stream has been appended with infinitely many items of $0$ marginal utility - this ensures that $k$ complete rounds are executed, but the output distribution on the non-trivial items is identical. Setting $q = 1$ in \cref{lem:guptaStyleTechnical}, since $f_{\{p\}}(\emptyset) = 0$, we see that
    \begin{align*}
        &\Ex_{S} \left[\prod_{u=1}^k Y_u^c | S_{i_{q}} \right] \leq 1 + \frac{1}{\varepsilon} \\
        \Rightarrow &\Ex_{i_1, \dots, i_k}\left[ \left( \prod_{u=1}^k \frac{1 + \sum_{j\in (i_{u-1}, i_u)} w_A (j | S_{i_u}) }{1 + \sum_{j\in (i_{u-1}, i_u)} w_B (j | S_{i_u})} \right)^c \right] \leq 1 + \frac{1}{\varepsilon} \\
        \Rightarrow &\Pr_{i_1,\dots, i_k}\left[ \left( \prod_{u=1}^k \frac{1 + \sum_{j\in (i_{u-1}, i_u)} w_A (j | S_{i_u}) }{1 + \sum_{j\in (i_{u-1}, i_u)} w_B (j | S_{i_u})} \right) > (1 + \varepsilon)^{1/2} \right] \leq \frac{(1 + 1/\varepsilon)}{(1 + \varepsilon)^{c/2}},
    \end{align*}
    wherein in the last step we apply Markov's inequality. Since $\varepsilon<1$, we have that
    \begin{align*}
        \frac{(1 + 1/\varepsilon)}{(1 + \varepsilon)^{c/2}} &\leq \frac{2/\varepsilon}{(1 + \varepsilon)^{c/2}} \\
        &\leq \frac{2/\varepsilon}{\exp( \varepsilon \ln 2 \cdot \frac{c}{2} )} ,
    \end{align*}
    wherein we use that for $\varepsilon < 1$, $1 + \varepsilon \geq \exp(\varepsilon \cdot \ln 2 )$. Setting $c = \frac{2}{\varepsilon \ln 2 } \log \frac{2}{\varepsilon\delta}$, which we note is $\geq 1$, we get that
    \begin{align*}
        \frac{(1 + 1/\varepsilon)}{(1 + \varepsilon)^{c/2}} &\leq \frac{2 / \varepsilon}{\exp(\ln 2/\varepsilon \delta)} \\
        &= \delta.
    \end{align*}
    It follows that with probability $1-\delta$,
    \begin{align*}
        \prod_{u=1}^k \frac{1 + \sum_{j\in (i_{u-1}, i_u)} w_A (j | S_{i_u}) }{1 + \sum_{j\in (i_{u-1}, i_u)} w_B (j | S_{i_u})} &\leq (1 + \varepsilon)^{1/2} \\
        \Rightarrow \frac{\Pr[\mathcal{A}(B) = E]}{\Pr[\mathcal{A}(A) = E]} &\leq 1 + \varepsilon.
    \end{align*}
    It follows that a run of \cref{alg:sparse} with Gumbel noise with noise parameter $\gamma = 4c = \frac{8}{\varepsilon \ln 2} \log \frac{2}{\varepsilon \delta}$ is $(\varepsilon, \delta)$-DP.
    \end{proof}
\section{Lower bound}
\label{sec:LB}

\newcommand{\calS}{\mathcal{S}}
\newcommand{\calT}{\mathcal{T}}

In this section we describe and prove a lower bound (\cref{thm:PSSMLB}) for private submodular maximization. This is a slightly weaker bound than that of \cite{DBLP:conf/soda/GuptaLMRT10} but is more general as it applies to the $(\varepsilon, \delta)$ instead of the $(\varepsilon, 0)$ setting. Further, it also happens to have a decomposable objective, showing that \cref{alg:PSSM} with Gumbel noise has the optimal dependence on $k$ and $\varepsilon$ (up to logarithmic terms).

\begin{definition}[Maximum coverage]
    Given a set system $(U, \calS)$, i.e. a ground set $U$ and a family $\calS$ of subsets of $U$, the maximum coverage problem fixes a private target subset $R\subset U$ and a number $k$ and asks the solver to pick $\calT \subset \calS$ such that $R \subset \cup_{T \in \calT} T$ and $|\calT| \leq k$.
\end{definition}

We can recast this problem in the form of submodular maximization, and then construct a hard instance of maximum coverage to prove our lower bound for $(\varepsilon, \delta)$-DP submodular maximization.

\begin{lemma}[Maximum coverage as submodular maximization]
    Given a set system $(U, \calS)$, and a set cover problem with a private target subset $R\subset U$ and budget $k$, it is easy to see that the objective
    \begin{align*}
        |R \cap (\cup_{T \in \calT} T)| &= \sum_{e \in R} 1_{e \in T}.
    \end{align*}
    is a decomposable submodular function with $|R|$ summands.
\end{lemma}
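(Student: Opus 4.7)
The plan is to exhibit the $|R|$ summands explicitly and verify the three properties needed: boundedness, monotonicity, and submodularity. Concretely, for each $e \in R$, define the summand $f_{\{e\}} : 2^{\calS} \to \{0,1\}$ by $f_{\{e\}}(\calT) = \mathbf{1}[e \in \bigcup_{T \in \calT} T]$, i.e.\ the indicator that $e$ is covered by some set chosen in $\calT$. Then the identity $|R \cap (\bigcup_{T \in \calT} T)| = \sum_{e \in R} f_{\{e\}}(\calT)$ is just the standard rewriting of a cardinality of an intersection as a sum of $|R|$ indicators, one per element of $R$, which matches the RHS in the lemma statement.

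Next I would verify each summand satisfies the hypotheses of the decomposable setting. Boundedness in $[0,1]$ is immediate since $f_{\{e\}}$ is an indicator. Monotonicity follows because enlarging $\calT$ can only enlarge $\bigcup_{T\in\calT}T$, so if $e$ is covered under $\calT$ it remains covered under any superset; hence $f_{\{e\}}$ is non-decreasing under set inclusion on $\calS$. For submodularity, fix $\calS' \subseteq \calT \subseteq \calS$ and $T' \notin \calT$, and compute the marginal gain $f_{\{e\}}(\calT \cup \{T'\}) - f_{\{e\}}(\calT)$, which equals $1$ exactly when $e \in T'$ and $e \notin \bigcup_{T\in\calT}T$, and $0$ otherwise. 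The latter condition implies $e \notin \bigcup_{T\in\calS'}T$ since $\calS' \subseteq \calT$, so whenever the marginal on top of $\calT$ is $1$ the marginal on top of $\calS'$ is also $1$; this gives $f_{\{e\}}(\calS'\cup\{T'\}) - f_{\{e\}}(\calS') \geq f_{\{e\}}(\calT\cup\{T'\}) - f_{\{e\}}(\calT)$, the diminishing returns property.

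Putting the pieces together, the coverage objective is written as a sum of $|R|$ monotone submodular functions bounded in $[0,1]$, so it falls exactly into the decomposable framework considered earlier (with the role of the ``agents'' played by the elements of the target set $R$, i.e.\ $m = |R|$). There is no substantive obstacle here; the statement is flagged as ``easy to see'' precisely because every property reduces to an elementary set-theoretic observation about covering. The real purpose of the lemma is structural: it licenses plugging a hard instance of maximum coverage into the $(\varepsilon,\delta)$-DP submodular maximization setting to drive the lower bound in \cref{thm:PSSMLB}.
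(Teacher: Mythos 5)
Your proof is correct and is exactly the argument the paper leaves implicit (the lemma is stated without proof as ``easy to see''): decompose into per-element coverage indicators $f_{\{e\}}(\calT)=\mathbf{1}[e\in\cup_{T\in\calT}T]$, which are $[0,1]$-bounded, monotone, and submodular, and sum to the coverage objective. The only note is that you correctly read the paper's display, which has a small typo ($1_{e\in T}$ should be $1_{e\in\cup_{T\in\calT}T}$).
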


\PSSMLB*

\begin{proof}
    We construct a hard instance for maximum coverage. Let $(U, \calS)$ be a set system where $\calS$ consists of all the singletons in $U$. Let $A$ be a set of size $k$ picked uniformly at random from $U$, and let the data set $D_A = A \times [L]$ for $L = \frac{\ln c \frac{e^{\varepsilon}-1}{\delta}}{2 \varepsilon}$. Let $n := |D_A| = |A| \cdot |L|$. Let $\calT$ be $k$ subsets of $\calS$ picked by the solver $M$ . The objective we are trying to maximize is
    \begin{align*}
        f(\calT) = \sum_{e \in D_A} 1_{\{ e \} \in \calT}.
    \end{align*}
    Let $M$ be any $(\varepsilon, \delta)$-DP algorithm for the set cover problem and let $\phi = \Ex_{M, A}[ (M(D_A) \cap A)/|A|]$, i.e. $\phi$ is the average fraction of points of $A$ (and consequently $D_A$) that were recovered successfully by $M$. $\phi$ captures the average approximation factor achieved by the algorithm $M$ over this family of hard instances. 
    
    We see that since $A$ is of size $k$, and the data set $D_A$ is simply points of $A$ repeated with multiplicity, the collection of sets $\calT = \{\{i\} : i \in A\}$ is a solution for this maximization problem that achieves $f(\OPT) = n$.  
    
    We observe that
    \begin{align*}
        \phi &= \Ex_{A, M} \left[\sum_{e \in D_A} 1_{\{ e \} \in \calT}\right]/|A| \\
        &= \Ex_{A, M} \Ex_{i\in A} [1_{i \in M(D_A)}]  \\
        &= \Ex_{i \in U} \Ex_{A,M} [1_{i \in M(D_A)} | i \in A].
    \end{align*}
    
    Fixing any choice of $i\in A$, let $i'$ be uniformly random in $U \backslash A$, and let $A' = (A \backslash \{i\}) \cup \{i'\}$; $A'$ is hence uniformly random over $U \backslash \{i\}$. We see that there is a chain of sets $D_{A}^0, D_A^1, \dots, D_A^L$ such that $D_A^0 = D_A$, $D_A^{t} = (D_A^{t-1} \backslash \{i\}) \cup \{i'\}$ for $t \in [L]$, and $D_A^L = D_{A'}$ (we recall that we treat data sets as multisets, allowing us to swap one copy of $i$ for one copy of $i'$ at a time. Since $M$ is $(\varepsilon, \delta)$-DP, it follows that for all $t\in [L]$,
    \begin{align*}
        \Ex_{M} [1_{i \in M(D_A^t)}] &\geq \exp(-2\varepsilon) \Ex_{M} [1_{i \in M(D_A^{t-1})}] - 2 \delta.
    \end{align*}
    It follows that
    \begin{align*}
        \Ex_{M} [1_{i \in M(D_A')}] &\geq \exp(-2 \varepsilon) \Ex_{M} [1_{i \in M(D_A^{L-1})} ] - 2 \delta. \\
        &\geq \exp(-4\varepsilon) \Ex_{M} [1_{i \in M(D_A^{L-2})} ] - \exp(- 2\varepsilon)\cdot 2\delta - 2 \delta \\
        &\geq \dots \\
        &\geq \exp(-2L \varepsilon) \Ex_{M} [1_{i \in M(D_A^0)}] - 2\delta\left(1 + \exp(-2\varepsilon) + \exp(-4\varepsilon) + \dots\right ) \\
        &\geq \exp(-2L \varepsilon) \Ex_{M} [1_{i \in M(D_A)}] - \frac{2 \delta}{1 - e^{ - 2 \varepsilon}} \\
        &\geq \exp(-2L \varepsilon) \Ex_{M} [1_{i \in M(D_A)}] - \frac{2 \delta}{e^{2 \varepsilon}-1}.
    \end{align*}
    Taking the expectation over $i\in U$ and the randomness in the choice of $A$, we get
    \begin{align*}
        \Ex_{i\in U} \Ex_{A,M} [1_{i \in M(D_A)} | i\not\in A ] &\geq \phi \exp(-2 L \varepsilon) - \frac{2\delta}{e^{2\varepsilon} - 1}.
    \end{align*}
    It follows by the law of total expectation that
    \begin{align*}
        \Ex_{i \in U} \Ex_{A, M} [1_{i \in M(D_A)}] \geq \phi \exp(-2L\varepsilon) - \frac{2\delta}{e^{2\varepsilon} - 1}.
    \end{align*}
    The LHS is at most $k/n$, so rearranging terms we get
    \begin{align*}
        \left(\frac{k}{n} + \frac{2\delta}{e^{2\varepsilon} - 1}\right) \exp(\varepsilon \cdot 2L) &\geq \phi.
    \end{align*}
    It follows that for $n \geq k \frac{e^{2\varepsilon} - 1}{2\delta}$, and $L \leq \frac{1}{2\varepsilon} \log c\frac{e^{2\varepsilon} - 1}{8\delta}$, $\phi$ is at most $c/2$. It follows that for all $c\geq \frac{8\delta}{e^{2\varepsilon}-1}$ either the algorithm fails to achieve the multiplicative approximation factor of $c$, or it incurs additive error $ckL/2 = \Omega((ck/\varepsilon) \log (\varepsilon/\delta))$.
\end{proof}
\section{Experiments}
\label{sec:exp}

\begin{figure}
    \begin{subfigure}{.25\textwidth}
        \begin{tikzpicture}
        \begin{axis}[
                  xlabel = {Cardinality constraint $k$},
                  ylabel = {Clustering cost},
                  error bars/y dir=both, 
                  error bars/y explicit,  
                  scale=0.3,
                  legend pos=outer north east
                  ]
                \addplot table [x=Params, y=Laplace, y error=LaplaceEB, col sep=comma] {graphs/taxi_eps_1E-1.csv};
                \addlegendentry{Laplace}
                \addplot table [x=Params, y=Ours, y error=OursEB, col sep=comma] {graphs/taxi_eps_1E-1.csv};
                \addlegendentry{Gumbel}
                \addplot table [x=Params, y=Non-private, y error=Non-privateEB, col sep=comma] {graphs/taxi_eps_1E-1.csv};
                \addlegendentry{Non-priv.}
                \addplot table [x=Params, y=Random, y error=RandomEB, col sep=comma] {graphs/taxi_eps_1E-1.csv};
                \addlegendentry{Random}
                \end{axis}
        \end{tikzpicture}
        \caption{$\varepsilon = 0.1$, Uber data set}
        \label{fig:variedUber1E-1}
        \end{subfigure}
        \hspace{2in}
        \begin{subfigure}{.25\textwidth}
            \begin{tikzpicture}
            \begin{axis}[
                      xlabel = {Cardinality constraint $k$},
                      ylabel = {Clustering cost},
                      error bars/y dir=both, 
                      error bars/y explicit,  
                      scale=0.3,
                      legend pos=outer north east
                      ]
                    \addplot table [x=Params, y=Laplace, y error=LaplaceEB, col sep=comma] {graphs/taxi_eps_1E0.csv};
                    \addlegendentry{Laplace}
                    \addplot table [x=Params, y=Ours, y error=OursEB, col sep=comma] {graphs/taxi_eps_1E0.csv};
                    \addlegendentry{Gumbel}
                    \addplot table [x=Params, y=Non-private, y error=Non-privateEB, col sep=comma] {graphs/taxi_eps_1E-1.csv};
                    \addlegendentry{Non-priv.}
                    \addplot table [x=Params, y=Random, y error=RandomEB, col sep=comma] {graphs/taxi_eps_1E-1.csv};
                    \addlegendentry{Random}
                    \end{axis}
            \end{tikzpicture}
        \caption{$\varepsilon = 1$, Uber data set}
        \label{fig:variedUber1E0}
    \end{subfigure}\\
    \begin{subfigure}{.25\textwidth}
        \begin{tikzpicture}
                \begin{axis}[
                  xlabel = {Cardinality constraint $k$},
                  ylabel = {Clustering cost},
                  error bars/y dir=both, 
                  error bars/y explicit,  
                  scale=0.3,
                  legend pos=outer north east
                  ]
                \addplot table [x=Params, y=Laplace, y error=LaplaceEB, col sep=comma] {graphs/synth_eps_1E-1.csv};
                \addlegendentry{Laplace}
                \addplot table [x=Params, y=Ours, y error=OursEB, col sep=comma] {graphs/synth_eps_1E-1.csv};
                \addlegendentry{Gumbel}
                \addplot table [x=Params, y=Non-private, y error=Non-privateEB, col sep=comma] {graphs/synth_eps_1E-1.csv};
                \addlegendentry{Non-priv.}
                \end{axis}
        \end{tikzpicture}
        \caption{$\varepsilon = 0.1$, Synthetic data set}
        \label{fig:variedSynth1E-1}
        \end{subfigure}
    \hspace{2in}
    \begin{subfigure}{.25\textwidth}
        \begin{tikzpicture}
                \begin{axis}[
                  xlabel = {Cardinality constraint $k$},
                  ylabel = {Clustering cost},
                  error bars/y dir=both, 
                  error bars/y explicit,  
                  scale=0.3,
                  legend pos=outer north east
                  ]
                \addplot table [x=Params, y=Laplace, y error=LaplaceEB, col sep=comma] {graphs/synth_eps_1E0.csv};
                \addlegendentry{Laplace}
                \addplot table [x=Params, y=Ours, y error=OursEB, col sep=comma] {graphs/synth_eps_1E0.csv};
                \addlegendentry{Gumbel}
                \addplot table [x=Params, y=Non-private, y error=Non-privateEB, col sep=comma] {graphs/synth_eps_1E-1.csv};
                \addlegendentry{Non-priv.}
                \end{axis}
        \end{tikzpicture}
        \caption{$\varepsilon = 1$, Synthetic data set}
        \label{fig:variedSynth1E0}
    \end{subfigure}
    \caption{We compare the performance of \Cref{alg:PSSM} with Laplace and Gumbel noise, as well as the non-private \Cref{alg:nonPrivStream} and a trivial random selection algorithm (marked Random in the legend) for a submodular maximization problem derived from an instance of $k$-medians clustering. This comparison is done for two different choices of privacy parameter $\varepsilon=0.1$ and $\varepsilon=1$; $\delta = 1/|P|^{1.5}$ where $P$ is the private data set to be protected. We also experiment with two different data sets, the Uber data set \citep{fivethirtyeight_2019} and a synthetic data set generated as described in the text.}
    \label{fig:exp}
\end{figure}
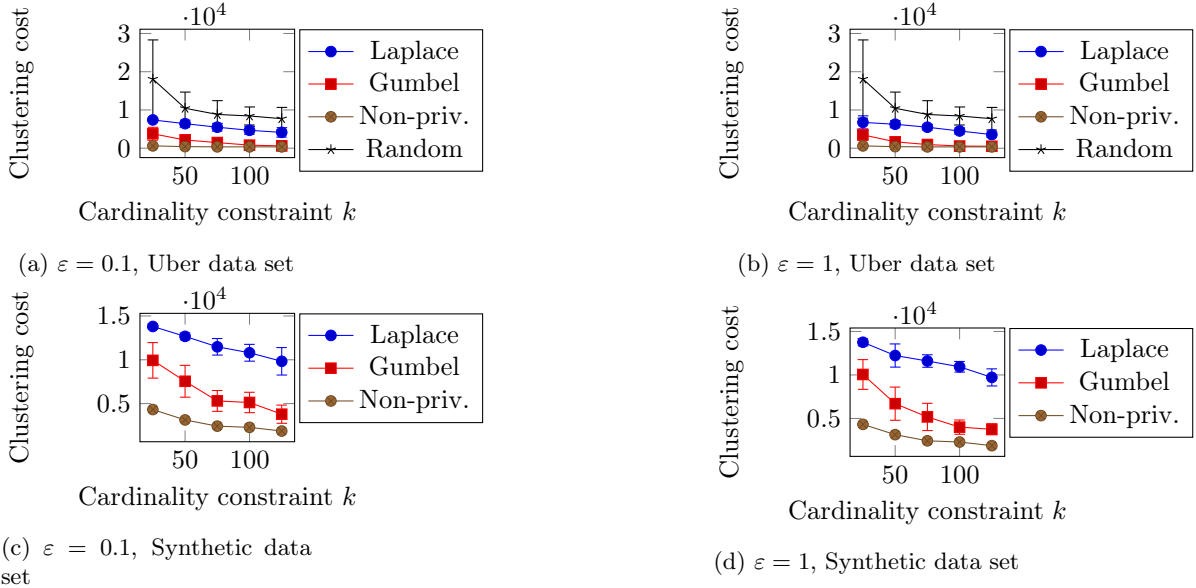

In this section, we empirically evaluate our approach on a practical instance of private submodular maximization. Given a set of points $V$, a metric $d: V \times V \rightarrow \mathbb{R}$, a private set of demand points $P\subseteq V$, the objective of the $k$-medians problem is to select a set of points $S \subset V$, $|S| \leq k$ to minimize $\text{cost}(S) = \sum_{p \in P} d(p,S)$, where $d(p,S) = \min_{s \in S} d(v,s)$. An application of his problem is allocating a relatively few ($k$) service centers to be able to reach a large set of clients ($P$) and ensure that there is at least one service location not too far from most clients; when the clients' locations are private but the service locations are public, this problem requires a differentially private solver. We can optimize this objective by casting it into the following submodular maximization problem:  $\max_{S\subset V,|S|\le k}\sum_{p\in P} 1 - d(p,S)/\G$, where $G$ is a normalization constant so that  $f_p(S) = 1 - d(p,S)/\G \in [0,1]$. Setting $d(p,\emptyset) = \G$, it can be checked that $\text{cost}(S)$ is a monotone decomposable submodular function.

We compare the performance of \Cref{alg:PSSM} with \textbf{Laplace} and \textbf{Gumbel} noise on two data sets. First, following \cite{DBLP:conf/icml/MitrovicB0K17,DBLP:journals/corr/abs-2005-14717} we use the Uber data set \cite{fivethirtyeight_2019} of Uber cab pick ups in Manhattan for the month of April 2014; the goal is to allocate public waiting locations for Uber cabs so as to serve requests from clients within short distances. Second, we construct a synthetic dataset in $\mathbb{R}^2$ by generating clients $P$ from a mixture of $50$ Gaussian distributions, each with identity covariance matrix and mean chosen uniformly at random from a bounding box of $[20]\times[20]$. We sample a $1000$ points from each Gaussian for a total of $50,000$ clients. For both settings, we set $d(\cdot, \cdot)$ to be the $\ell_1$ or Manhattan distance, i.e. $d(a, b) = |a_1 - b_1| + |a_2 - b_2|$. We set V to be a $50\times 50$ 2-D grid of points uniformly spanning the rectangular domain.

We compare our two algorithms with an approach that selects $k$ \textbf{Random} points from the stream as a differentially private baseline, and the \textbf{Non-private} algorithm~\ref{alg:nonPrivStream}. For both data sets, we set $\delta = 1/|P|^{1.5}$ and $\theta = 0.2$.  In \cref{fig:exp} we graph the clustering cost versus the cardinality constraint $k$ on the Taxi and Synthetic data sets. We measure and report the mean and standard deviation of the clustering cost over $20$ random runs with varied $k$ and $\varepsilon$ (performed on a PC with 5.2 GHz i9 chip and 64 GB RAM). 

For our private algorithms, we set  $E = \min\left( k\log n/\varepsilon, |P|/2\right)$.  We set $E = \min(\max_{e_i \in V} f(e_i), k\log n/\varepsilon,\allowbreak |P|/2) $ for the non-private algorithm. This guarantees that the number of copies for the non-private algorithm is at least that of the private algorithms. Instead of using the exponential mechanism to output the solution in algorithm~\ref{alg:PSSM}, we use the Report Noisy Max mechanism with equivalent privacy guarantee and similar tail bound (see \cite{DBLP:journals/fttcs/DworkR14}); this avoids potential overflow issues with the exponential mechanism. When the number of elements left in the stream of a non-private instance is less than $k-|S|$, we add the rest of the points to $S$. This does not affect the theoretical guarantee, but might benefit the algorithm empirically.

Although we apply advanced composition in our theoretical analyses as it asymptotically requires lower noise than basic composition, because of the difference in constant coefficients, basic composition works better for the number of thresholds we need to consider. For this experiment, we apply basic composition and set $\varepsilon' = \varepsilon/T, \delta' = \delta/T$. 

In \cref{fig:variedSynth1E-1,fig:variedSynth1E0} we omit the result of the random approach, which has much higher cost than the rest. In \cref{fig:variedUber1E-1,fig:variedSynth1E-1}, we report the results for $\varepsilon = 0.1$.
We observe that Gumbel noise outperforms Laplace noise in both settings. Although Gumbel noise leads to higher variance, increasing $k$ results in greater reduction of the clustering cost than the Laplace noise approach. We observe similar results in  \cref{fig:variedUber1E0,fig:variedSynth1E0} for $\varepsilon = 1$. Gumbel noise continues to outperform Laplace noise in both settings. Increasing the privacy budget from $0.1$ to $1$ slightly improves the utility of differentially private approaches.

\bibliographystyle{unsrtnat}
\bibliography{biblio}

\end{document}